\newcommand{\dd}{\mathrm{d}}
\newcommand{\N}{\mathcal{N}}
\DeclareMathOperator*{\argmin}{arg\,min}
\newtheorem{remark}{\textbf{Remark}}
\newtheorem{proposition}{\textbf{Proposition}}
\newtheorem{theorem}{\textbf{Theorem}}
\newenvironment{proof}{\par\noindent\textbf{Proof.}\ }{\hfill$\blacksquare$\par}
\def\BibTeX{{\rm B\kern-.05em{\sc i\kern-.025em b}\kern-.08em
    T\kern-.1667em\lower.7ex\hbox{E}\kern-.125emX}}
\begin{document}
\title{Communication-Aware Map Compression for Online
Path-Planning: A Rate-Distortion Approach}
\author{Ali Reza Pedram, Evangelos Psomiadis, \IEEEmembership{Student Member,~IEEE}, Dipankar Maity, \IEEEmembership{Senior Member,~IEEE}, Panagiotis Tsiotras,~\IEEEmembership{Fellow,~IEEE} 
\thanks{This work was supported by the ARL grant DCIST CRA W911NF-17-2-0181. A.R. Pedram, E.  Psomiadis, and P. Tsiotras are with the Department of Aerospace Engineering of Georgia Institute of Technology. D. Maity is with the Department of Electrical and Computer Engineering,
University of North Carolina at Charlotte.
Corresponding author: \texttt{apedram@gatech.edu}}}


\maketitle

\begin{abstract}
This paper addresses the problem of collaborative navigation in an unknown environment, where two robots, referred to in the sequel as the \textit{Seeker} and the \textit{Supporter}, traverse the space simultaneously. The Supporter assists the Seeker by transmitting a compressed representation of its local map under bandwidth constraints to support the Seeker's path-planning task. We introduce a bit-rate metric based on the expected binary codeword length to quantify communication cost. Using this metric, we formulate the compression design problem as a rate-distortion optimization problem that determines \textit{when} to communicate, \textit{which} regions of the map should be included in the compressed representation, and at \textit{what resolution} (i.e., quantization level) they should be encoded. Our formulation allows different map regions to be encoded at varying quantization levels based on their relevance to the Seeker's path-planning task. We demonstrate that the resulting optimization problem is convex, and admits a closed-form solution known in the information theory literature as reverse water-filling, enabling efficient, low-computation, and real-time implementation. Additionally, we show that the Seeker can infer the compression decisions of the Supporter independently, requiring only the encoded map content and not the encoding policy itself to be transmitted, thereby reducing communication overhead. Simulation results indicate that our method effectively constructs compressed, task-relevant map representations, both in content and resolution, that guide the Seeker’s planning decisions even under tight bandwidth limitations.
\end{abstract}

\begin{IEEEkeywords}
Navigation in Unknown Environments, Heterogeneous Multi-agent Systems,  Collaborative Path Planning, Communication-aware Map Compression 
\end{IEEEkeywords}

\section{Introduction}
\IEEEPARstart{A}{utonomous} navigation in unknown environments is essential for many real-world robotic applications, including search and rescue missions \cite{queralta2020collaborative}, agricultural surveys, and planetary exploration \cite{rockenbauer2024traversing}. For a ground robot to traverse such environments efficiently, it must incrementally build an understanding of its surroundings and plan its paths accordingly. However, directly exploring unknown terrain can be costly, inefficient, and risky --- potentially leading to entrapment, electro-mechanical damage, or mission failure. To mitigate these risks, a common strategy is to employ a second robot --- a lightweight ``scouting agent"  such as an aerial drone --- that explores the environment in parallel. The scouting robot can collect environmental data from a broader vantage point and relay this information to the ground robot, which we refer to as the \textit{Seeker}. This separation of roles allows the ground robot to focus on payload transport or precise operations, while the scout, hereafter referred to as the \textit{Supporter}, aids in a more efficient and informative route planning. 

\begin{figure}[tb]
    \centering        
{\includegraphics[width=0.7\linewidth]{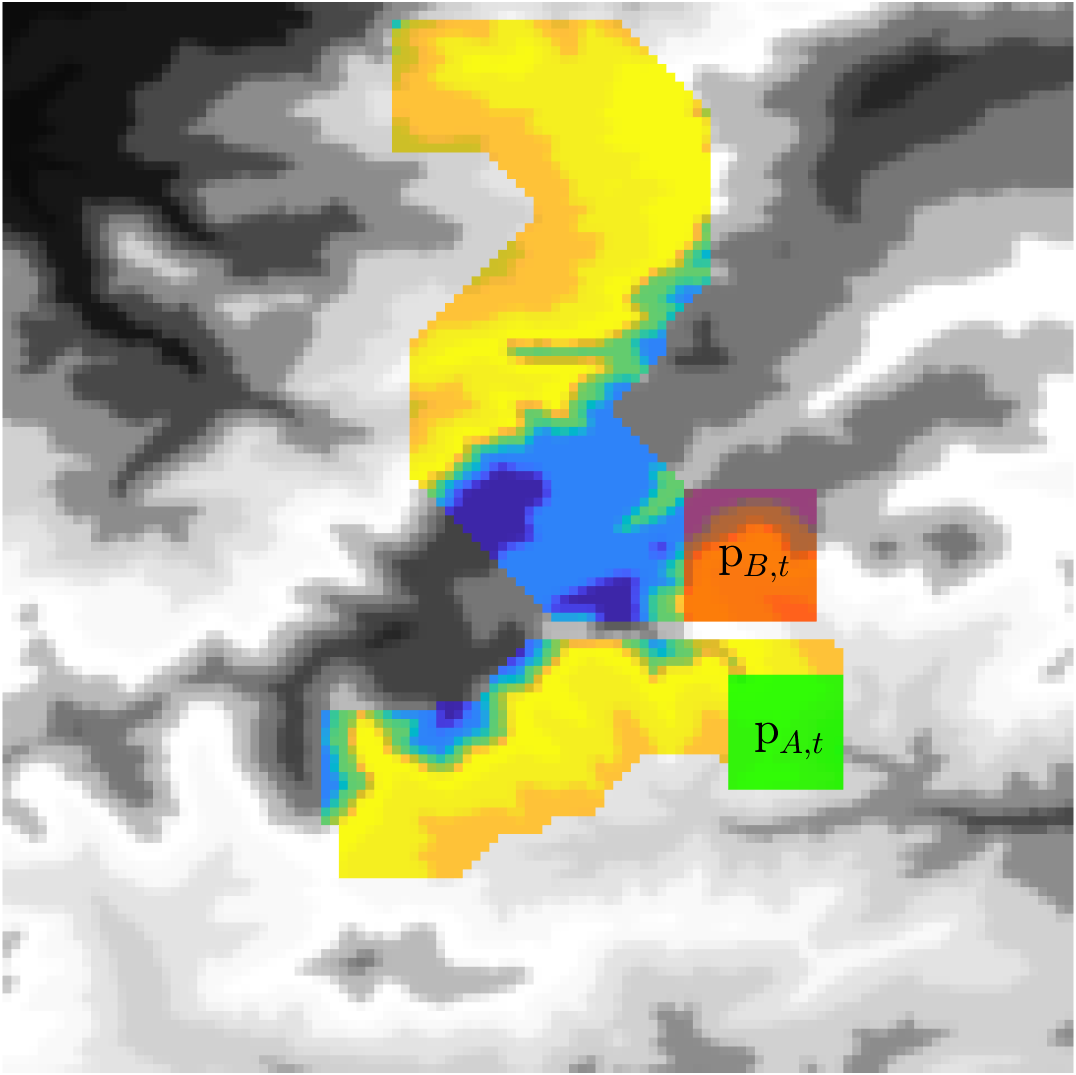}
    \caption{A sample traversability map. The positions of the Seeker and the Supporter are denoted by $\textup{p}_{A,t}$ and $\textup{p}_{B,t}$, respectively. The currently observable cells are highlighted with green and orange rectangles for the Seeker and the Supporter, respectively. Unobserved cells are shown in gray, with shading intensity proportional to traversability values.
    Previously observed parts of the map are color-coded based on their traversability values; lighter areas represent regions that are easier to traverse.} 
    \label{fig:map}} 
\end{figure}

We assume a collaborative navigation setting between the Seeker and the Supporter when both robots are equipped with onboard sensors and collect local observations as they move, as demonstrated in Fig.~\ref{fig:map}. The Seeker aims to reach a known destination while minimizing traversal cost, guided by an online path-planning algorithm. 
The Supporter observes the environment and periodically sends its observations to the Seeker to aid its navigation task, under the assumption that the environment's overall size and the goal location are known to both agents. 
We assume that the Supporter follows a predetermined path or one generated using standard path-planners such as \cite{sasaki2020map, rockenbauer2024traversing, psomiadis2024multi}, 
designed to maximize environmental coverage or information gain. To support coordination, the Supporter also receives the Seeker’s predictive path at each time step.

Sending the Supporter's entire local map to the Seeker requires substantial communication bandwidth, which may not be available in real-world scenarios characterized by limited or unreliable connectivity.
Moreover, there is the possibility that the observed map is marginally relevant to the Seeker's planning task, and thus communicating the map would drain the robots' resources (e.g., communication bandwidth or computation power) or increase latency (i.e., the computation time required to analyze a newly received local map) with little to no benefit. 
Thus, it is necessary to think about how we can strategically manage the communication resources.

In this paper, we present what is, to the best of our knowledge, the first framework that enables the design of informative and compressed representations of the Supporter's local map for use in communication-aware path-planning. These compressed representations are shared with the Seeker and are used to guide its decisions while navigating under bandwidth constraints. Our key contributions are as follows:

\begin{itemize}
    \item We introduce a communication-aware map compression policy that determines \textit{when} communication should occur, \textit{which} parts of the observed environment are relevant, and at \textit{what resolution (quantization level)} they should be transmitted from the Supporter to the Seeker, to support efficient path-planning in bandwidth-limited settings.
    \item The proposed compression scheme is based on the (Gaussian) rate-distortion formulation, which selects and compresses map content based on its contribution to the Seeker's path-planning task. We show that the resulting optimization problem admits a closed-form solution, enabling efficient on-the-fly implementation during navigation.
    \item We show that the Seeker can predict the compression decision at each time step by replicating the Supporter's compression logic. As a result, the Supporter only needs to transmit the compressed content, eliminating the need to communicate the compression policy itself, thereby reducing overhead and preserving bandwidth.
    \item We validate our approach in simulation, demonstrating that it achieves effective navigation performance under strict communication constraints.
\end{itemize}

\textbf{Notation:} Matrices and vectors are represented by uppercase and lowercase letters, respectively. We denote the set of integers by $\mathbb{Z}$, and the $d$-dimensional Euclidean space by $\mathbb{R}^d$. 
For a vector $x \in \mathbb{R}^d$, the $i$-th element of $x$ is denoted by $[x]_i$, where $i = 1, \dots, d$.
The set of symmetric positive semi-definite $d \times d$ matrices is defined as
$\mathbb{S}_d^{+} \triangleq \{ P \in \mathbb{R}_d \times \mathbb{R}_d : P=P^\top, P \succeq 0\}$.
For two symmetric matrices $P$ and $Q$, $P \succeq Q$ implies $P-Q \succeq 0$. The functions $\mathrm{tr}(\cdot)$, $\mathrm{rank}(\cdot)$, and $\det(\cdot)$ denote the trace, rank, and determinant of a matrix, respectively.
We use $\mathrm{diag}_{1 \leq i \leq d}(x_i) $ to denote the $d \times d$ diagonal matrix with diagonal entries $x_1, \dots, x_d$. The set of all $d \times d$ diagonal matrices is denoted by $ \mathbb{D}_d$. We use $x_{0:t}$ to denote the sequence $\{x_0, \dots, x_t\}$, and $I_d$ and $\textbf{1}_d$ to denote the $d \times d$ identity matrix and $d$-dimensional column vector of all ones, respectively. Variables associated with the Seeker are indexed by subscript 
$A$, while those associated with the Supporter are indexed by subscript $B$.

Random variables are denoted by bold symbols (e.g., $\bm{x}$), and
their realization by non-bold symbols (e.g., $x$). 
The expressions $\mathbb{E}[\bm{x}]$ and $\mathbb{V}[\bm{x}]$ are the mean and variance of $\bm{x}$, respectively. The \textit{differential entropy} of $\bm{x}$ having probability density function  $f(x)$ is defined as $h(\bm{x}) \triangleq -\int f(x) \log(f(x)) \dd x$.
The \textit{Kullback–Leibler $(\mathrm{KL})$ divergence} between two random variables $\bm{x}$ and $\bm{y}$ with respective probability distribution functions $f(x)$ and $g(x)$ is defined as 
$\mathrm{KL}(\bm{x}\|\bm{y}) \triangleq \int f(x) \log(\frac{f(x)}{g(x)}) \dd x$.
Also, $h(\bm{x}|\bm{y})$ denotes the \textit{conditional entropy} of $\bm{x}$ given $\bm{y}$. 
The \textit{mutual information} between $\bm{x}$ and $\bm{y}$ defined as $I(\bm{x}; \bm{y})\triangleq h(\bm{x})-h(\bm{x}|\bm{y})= h(\bm{y})-h(\bm{y}|\bm{x})$. The $d$-dimensional Gaussian random variable with mean $\mu\in \mathbb{R}^d$ and covariance $P \in \mathbb{S}_{+}^{d}$ is denoted by $\bm{x}\sim \mathcal{N}(\mu, P)$. The entropy of $\mathcal{N}(x, P)$ is  $h(\bm{x})= \frac{d}{2}\log(2\pi e) + \frac{1}{2}\log\!\det(P)$.  The scalar uniform distribution between $a$ and $b$ is denoted by $\bm{y} \sim \mathcal{U}(a, b)$, where $h(\bm{y})= \log(b-a)$.

\section{Related Work}
Multi-agent robotic systems have been extensively studied for autonomous navigation and exploration due to their scalability, adaptability, and ability to coordinate effectively. Heterogeneous teams, in particular, are valued for combining diverse sensing and mobility capabilities, as in the Seeker-Supporter setting considered in this paper. Prior surveys have reviewed coordination strategies, communication architectures, and task allocation mechanisms that support such systems \cite{grocholsky2006cooperative, cladera2024challenges}.

One major research direction focuses on informative planning, where the Supporter’s trajectory is optimized to improve the Seeker’s decision-making. For example, \cite{sasaki2020map} proposes an iterative framework that uses hyper-belief propagation to select observations that reduce localization uncertainty. Similarly, \cite{rockenbauer2024traversing} presents a cost-aware informative path-planning strategy that aims to minimize the Supporter’s traversal cost, offering formal guarantees and outperforming frontier-based heuristic methods such as \cite{isler2016information}.

Another emerging direction is semantically informed collaboration, where agents exchange high-level representations of the environment instead of raw sensor data. Semantic mapping frameworks, such as \cite{yue2020collaborative}, enable robots to share task-relevant scene understanding in a compact form. Similarly, distributed SLAM methods like DDF-SAM \cite{cunningham2013ddf} support consistent map fusion across agents without requiring full data exchange. 

A third line of research focuses on communication-aware map compression, where the goal is to transmit compact, task-relevant data while minimizing bandwidth usage. D-Lite \cite{chang2023d}, for example, compresses 3D scene graphs to prioritize navigation-relevant content under communication constraints. Similarly, \cite{Damigos2024} proposes a control function for 5G-enabled multi-robot systems that adapts the transmission rate based on network KPIs, enabling real-time 3D point cloud map merging via edge servers while avoiding data congestion. Compression selection methods, such as \cite{psomiadis2025communication}, have been proposed to select the appropriate compression scheme from a predefined set of compression templates. 
Although these approaches effectively reduce data transmission and improve the Seeker’s path-planning performance, they primarily focus on selecting among existing compression schemes rather than designing new ones adapted to the task.

In contrast, the approach proposed in this paper addresses the design of task-aware compression schemes based on a rate-distortion trade-off. This aligns with the broader framework of \textit{minimum information control} \cite{tishby2000information}, where the goal is to minimize the mutual information between sensing and action while preserving performance. Related studies in control theory address the trade-off between task effectiveness and communication cost through quantizer selection \cite{maity2021optimal, maity2023optimal}, and belief-space planning approaches such as \cite{pedram2022gaussian, hibbard2023simultaneous} optimizing for both motion and information cost. This growing body of work highlights a shift toward autonomy that is aware of both mission goals and the informational cost of achieving them.

\begin{figure*}
    \centering
{\includegraphics[trim = 2.6cm 1.8cm 0.4cm 0.9cm, clip, width=0.97\linewidth]{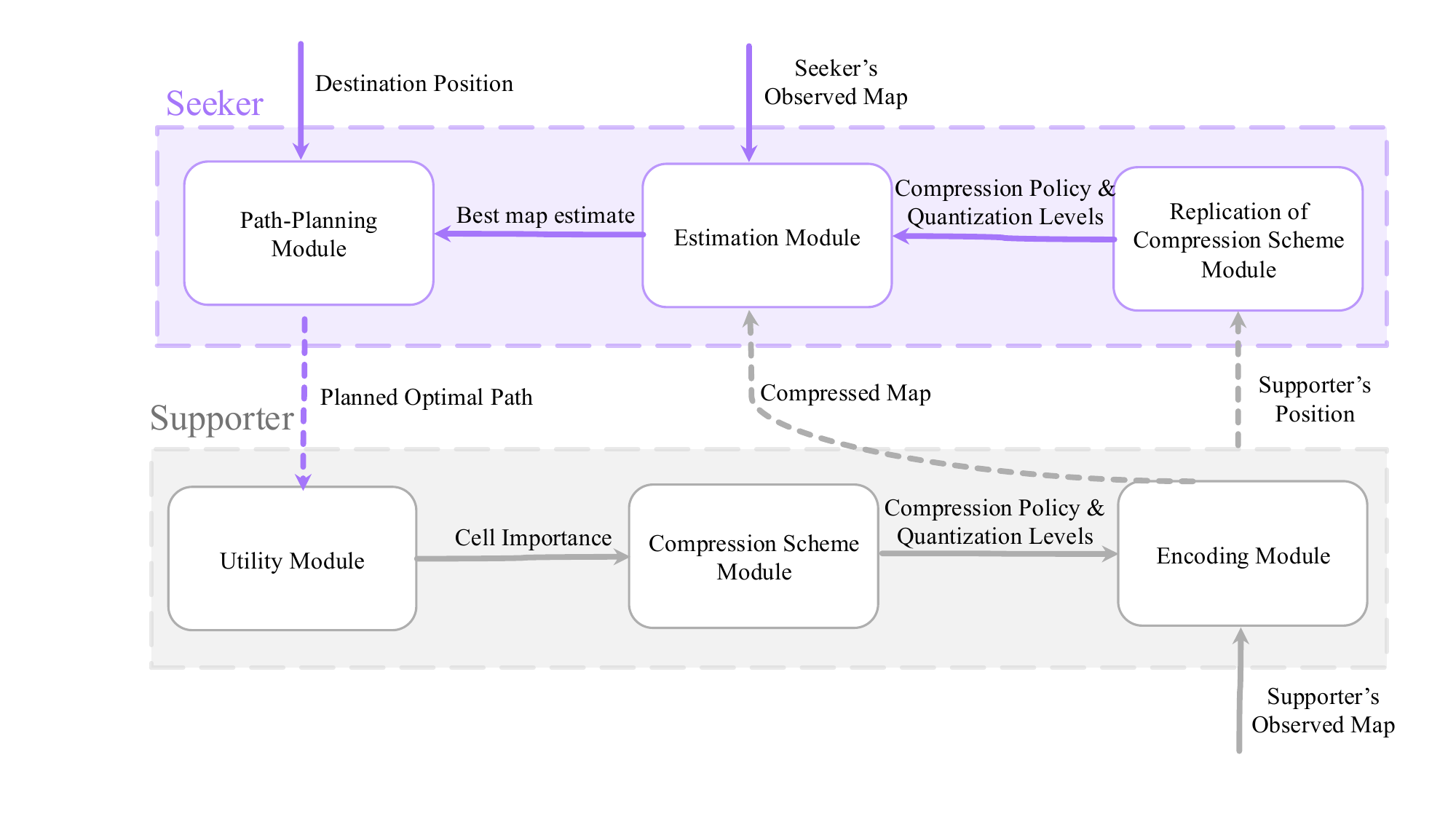}}
    \caption{The schematic of the proposed framework. At each time step, the Seeker sends its current optimal path to the Supporter. the Supporter uses this information to select the optimal compression of its local map and sends the compressed map and its own location to the Supporter.}
    \label{fig:framework}
\end{figure*}

\section{System Overview}
This section provides a high-level overview of the collaborative navigation framework, serving as necessary background to introduce the main question of this work, namely the design of optimal compression mechanisms, addressed in the subsequent sections. We first describe the environment and its map representation, whose overall size is known to both agents, along with the key system components.

We assume that the environment is represented by a traversability grid map $M$ in 2D with $d$ cells. Fig.~\ref{fig:map} represents an example of $M$ with $d = 128 \times 128$.
The traversability values of the grid cells are stored in a vector $x \in \mathbb{R}^{d}$, where $[x]_j$ is in the range $[0,1]$ for all $j = 1, \dots, d$. 
The value $[x]_j = 1$ indicates a fully-occupied (and thus untraversable) cell, whereas $[x]_j = 0$ denotes a free cell.  In Fig.~\ref{fig:map}, the darker regions indicate areas more difficult to traverse, such as obstacles and rough terrain.
In contrast, the lighter regions with a yellow hue indicate areas that are easier to traverse, like clear paths and smoother terrain.

Fig.~\ref{fig:framework} illustrates the architecture of the proposed framework.
The Supporter follows a predefined path, possibly one designed to maximize information gathering. At each time step, the Supporter receives the Seeker’s current planned path, which includes its current position, destination position, and intermediate positions along the path. The Supporter collects its own local observations and uses the Utility Module to evaluate the relevance (or importance) of its observed map cells based on the Seeker's current path. Next, the Supporter solves a rate-distortion optimization problem to compute an informative and bandwidth-efficient compression of its local map. 
Finally, the selected compression policy is applied to the local observation, producing a compressed representation, which is subsequently quantized based on the designed quantization levels. Finally, the compressed and quantized data is entropy-coded and transmitted to the Seeker along with the Supporter's current location.

On the Seeker side, the robot first uses the Supporter's transmitted position and a replica of the Supporter's compression scheme to infer the compression map and quantization levels used to generate the received compressed signal. 
Knowing this, the Seeker then combines the received compressed map data with its own sensor observations and updates its belief over the traversability map. The updated belief is then used by the Seeker's Path-Planning Module to compute an optimal path to the destination at the next time step.

In the remainder of this section, we describe the observation models and the standard processing modules common to multi-robot navigation frameworks. The Utility Module and Compression Scheme Module, introduced specifically to enable communication-aware compression, are presented separately in Section~\ref{sec:compression}.

\subsection{Observation Models}
\label{subsec:obs}
As mentioned earlier, the Seeker is unaware of the traversability values of the map $x$. 
At time $t$, the Seeker's position is denoted by $\textup{p}_{A,t}$. From this position, it can only observe the cells within its field of view (FOV), and this observation is subject to noise. For instance, the green rectangle in Fig.~\ref{fig:map} shows the cells in the FOV at $\textup{p}_{A,t}$.
By arbitrarily ordering the cells within the Seeker's FOV, we represent the Seeker's local map at time $t$ as the vector $x_{A,t}\in \mathbb{R}^{d_A}$, where $d_A$ is the size of the local map. 
The relationship between the observed area 
$x_A$ and the complete map $x$ is characterized by a linear relation $x_{A,t}=C_{A,t} x$. 
The matrix $C_{A,t} \in \mathbb{R}^{d_A \times d}$ is a $\{0, 1\}$ matrix with $[C_{A,t}]_{ij}=1$ if $[x]_j$ is the $i$-th element in $x_{A,t}$, and $[C_{A,t}]_{ij}=0$, otherwise. It is worth noting that $C_{A,t}$ is solely a function of $\textup{p}_{A,t}$.  We assume that the Seeker observes $x_{A,t}$ with additive Gaussian noise:
\[\bm{y}_{A,t} = x_{A,t} + \bm{m}_t =C_{A,t} x +\bm{m}_t,\]
where $\bm{m}_t \sim \N(0, \sigma_m^2 I_{d_A})$ represents independent, zero-mean Gaussian noise across observed cell, with variance $\sigma_m^2$. 

The Supporter is also equipped with onboard sensing capability that can observe its local map but without noise. For example, the orange rectangle in Fig.~\ref{fig:map} shows the cells in the FOV at $\textup{p}_{B,t}$.
Similar to the Seeker, we denote the Supporter's position and local map at time $t$ as $\textup{p}_{B,t}$ and $x_{B,t}\in \mathbb{R}^{d_B}$, respectively, where $d_B$ denotes the size of the Supporter's FOV. 
Likewise, the Supporter's local map is related to the global map by the linear relation map $x_{B,t} = C_{B,t} x$, where $C_{B,t} \in \mathbb{R}^{d_B\times d}$ is also a $\{0,1\}$ matrix, and is solely a function of $\textup{p}_{B,t}$.
We denote the cells outside of the Supporter's local map by $x_{O,t} \in \mathbb{R}^{d-d_B}$, which is also linearly related to the global map as $x_{O,t}=C_{O,t} x$ with a $\{0, 1\}$ matrix $C_{O,t} \in \mathbb{R}^{(d-d_B)\times d}$. 
If we define the re-ordered map state $x_{R,t}=[x_{B,t}^\top, x_{O,t}^\top ]^\top\!$, $x_{R,t}$ is mapped linearly to $x$ by $x_{R,t}=C_{R,t} x$, with  $C_{R,t}=[C_{B,t}^\top, C_{O,t}^\top ]^\top \in \mathbb{R}^{d\times d}$. 
One may verify that the matrix $C_{R,t}$ is a permutation matrix i.e., a $\{0,1\}$ unitary matrix ($C_{R,t} C_{R,t}^\top=C_{R,t}^\top C_{R,t} = I$). 

\begin{remark}
    The noise-free assumption for the Supporter’s observations is made to simplify the presentation. This assumption does not limit the generality of our approach and is formally relaxed in Section~\ref{subsec:noisy}.
\end{remark}

\subsection{Encoding Module and Communication Scheme}
\label{subsec:com}

Due to communication constraints, it is often impossible (or maybe not strategic) to send the Supporter's local map $x_{B,t}$ to the Seeker. 
Instead, it is helpful to compress and only transmit a compressed version of $x_{B,t}$. 
In this paper, we restrict ourselves to a linear compression scheme,  described by a linear mapping $\Theta: \mathbb{R}^{d_B} \rightarrow \mathbb{R}^{d_\Theta}$ 
 between the full-resolution map space and the compressed map space, with $0 \leq d_{\Theta} \leq d_B$. 
That is, for a given full-resolution local map of the Supporter $x_{B,t}$ and compression matrix $\Theta_t \in \mathbb{R}^{d_{\Theta_t} \times d_B}$, the compressed representation is computed as 
\[
o_{B,t} = \Theta_t x_{B,t}, \quad o_{B,t} \in \mathbb{R}^{d_{\Theta_t}}.
\]
This linear compression scheme can be interpreted as computing features by aggregating information across spatial regions, where each feature is formed as a weighted sum over the original map cells. 
In this view, $d_{\Theta_t}$ represents the number of such features, with each row of 
$\Theta_t$ defining a specific aggregation pattern. We assume $\Theta_t$ is full rank, meaning $\mathrm{rank}(\Theta_t)=d_{\Theta_t}$. 
Thus, $d_{\Theta}=0$ corresponds to sending a null signal (i.e., no communication), while $d_{\Theta}=d_B$ corresponds to sending the full-resolution map.
This formulation allows the Supporter to flexibly share partial information or suppress communication entirely when the map content is not useful for the Seeker's planning task.

To transmit the continuous-value vector $ o_{B,t}$ over a digital channel, it must be quantized, as digital communication systems can only transmit finite-bit representations. 
While quantization is necessary for communication over digital channels, in our framework it also serves a second, more strategic role; namely, as an additional degree of freedom to manage communication cost. 
Specifically, the Supporter can deliberately adjust the quantization resolution to reduce communication overhead. 

Direct quantization of $o_{B,t}$ can cause problems for inference on the Seeker’s side.
The Seeker does not have access to the exact value of $o_{B,t}$; instead, it maintains a belief about the compressed map.
Upon receiving a quantized signal, the Seeker must infer the associated quantization error, which itself becomes a random variable whose distribution depends on the signal and is generally non-uniform. 
This signal-dependent and non-uniform nature of the error complicates modeling and estimation. 
Since such communications repeat over time, the inference becomes increasingly intractable.

To mitigate this challenge, we adopt \textit{dithered quantization}, a standard technique that makes the quantization error have a uniform distribution and renders it independent of the signal, making it analytically tractable.
Specifically, we adopt the \textit{entropy-coded dithered quantization (ECDQ)} scheme~\cite{derpich2012improved}, where each component $[o_{B,t}]_i$ 
is perturbed by a random dither $\eta_{i,t}$ sampled uniformly from $\bm{\eta}_{i,t} \sim \mathcal{U}\left(-\Delta_{i,t}/2,\Delta_{i,t}/2\right)$, producing
\begin{align*}
    [z_{t}]_{i} = [o_{B,t}]_i + \eta_{i,t}.
\end{align*}
The perturbed signal $[z_{t}]_{i}$ is then passed through a scalar uniform quantizer with sensitivity $ \Delta_{i,t}$, resulting in
\begin{align*}
    &q_{i,t}=Q_{\Delta_{i,t}}([z_{t}]_i) = k \Delta_{i,t}, 
\end{align*}
where $k \in \mathbb{Z}$ is an integer such that
\begin{align*}
   (k-1/2) \Delta_{i,t} \!\leq\! [z_{t}]_i\!<\! (k+1/2) \Delta_{i,t}. 
\end{align*}
Essentially, the function $Q_{\Delta_{i,t}}(\cdot)$ rounds its argument to the nearest multiple of $\Delta_{i,t}$.
Next, $q_{i,t}$ is encoded into a binary code-word by an entropy coding scheme \cite[Chapter~5]{cover1999elements} and is then transmitted to the Seeker over a binary channel.

The Seeker receives $q_{i,t}$ and interprets it as the realization of the random variable 
$\bm{q}_{i,t} $ $\triangleq Q_{\Delta_{i,t}}([o_{B,t}]_i+\bm{\eta}_{i,t})$.
If we define the reconstruction $[\bm{y}_{B,t}]_i \triangleq \bm{q}_{i,t}- \bm{\eta}_{i,t}$, it is shown \cite{tanaka2016rate} that $[\bm{y}_{B,t}]_i =  [o_{B,t}]_i +\bm{n}_{i,t}$, where $\bm{n}_{i,t}$ is the uniform random variable
\[
\bm{n}_{i,t} \sim \mathcal{U}\left(-\tfrac{\Delta_{i,t}}{2}, \tfrac{\Delta_{i,t}}{2}\right).
\]
Here, the noise $\bm{n}_{i,t}$ is a deterministic, one-to-one function of the dither $\bm{\eta}_{i,t}$ (see ~\cite[Appendix]{tanaka2016extended} for a detailed discussion), and thus
independent of $[o_{B,t}]_i$ and  $\bm{n}_{j,t}$ for all $j\neq i$.
We note that in order to compute the reconstructed signal $[y_{B,t}]_i =q_{i,t}-\eta_{i,t}$, the Seeker must use the same dither vector $\eta_{i,t}$ that was used by the Supporter during quantization. 
In practice, this is achieved through shared pseudo-randomness; that is, both robots use synchronized random number generators to produce identical dither realizations.
As a result, the signal received by the Seeker can be written as  
\begin{align*}
    \bm{y}_{B,t} = o_{B,t} + \bm{n}_t = \Theta_t C_{B,t} x + \bm{n}_t,
\end{align*}
where the noise vector $\bm{n}_t$ is defined by stacking the scalar uniform, independent, random variables $\bm{n}_{i,t}$. Since each $\bm{n}_{i,t}$ has $\mathbb{E}[\bm{n}_{i,t}] = 0 $ and $\mathbb{V}[\bm{n}_{i,t}]=\Delta_{i,t}^2/{12}$, we have that $\mathbb{E}[\bm{n}_t] = 0_{d_{\Theta_t}}$ and $ \displaystyle \mathbb{V}[\bm{n}_t]= N_t \triangleq \operatorname*{diag}_{1\leq i \leq d_{\Theta_t}} (\Delta_{i,t}^2/12)$.

In this paper, we develop a framework that allows us to jointly design the compression $\Theta_t$ and the quantization levels $\Delta_{i,t}$ resulting in the most informative signal $\bm{y}_{B,t}$ that is in compliance with the communication resource constraint. 
Importantly, the quantization $\Delta_{i,t}$ levels
are not necessarily equal across features; different components of the compressed map may be quantized at different levels depending on their task relevance, and consequently may incur different communication costs. 

In order to infer the received signal $\bm{y}_{B,t}$, the Seeker needs to know the adopted $\Theta_t$ and $N_{t}$. 
Communicating these variables
to the Seeker may drain the available communication resources and defeat the purpose of compression. 
In our proposed framework, we show in Section~\ref{sec:compression} that optimal compression and quantization levels can be determined independently by both the Supporter and the Seeker. 
As a result, there will be no need to send the designed $\Theta_t$ and $N_t$. 

\subsection{Estimation Module}
\label{subsec:estimation}

We assume that the Seeker begins with an initial belief $\bm{x}_0$ of the traversability vector $ x \in \mathbb{R}^d 
$ of the environment having mean $\mathbb{E}[\bm{x}_0] = \hat{x}_0 $ and covariance $\mathbb{V}[\bm{x}_0]=P_0$. At each time step, the Seeker computes a linear minimum variance estimate (LMVE) of $x$ using a recursive least squares (RLS) algorithm. This estimate is updated using both the Seeker's observations and the compressed representations received from the Supporter. Although the belief of the traversability vector and the measurements may not be jointly Gaussian, we adopt a \textit{linear estimator} that follows the structure of the Kalman filter. This estimator minimizes the mean squared error among all linear estimators and retains analytical tractability under non-Gaussian conditions. In particular, we do not assume a Gaussian posterior; instead, we maintain only the \textit{first two moments}, namely, the mean and the covariance, which evolve according to closed-form update rules as shown in \cite{psomiadis2025communication}.

Specifically, at time step $ t $, the Seeker maintains the belief $\bm{x}_t$ over $x$, characterized by the mean $ \hat{x}_t $ and the covariance $ P_t $. Upon receiving its own measurement $\bm{y}_{A,t}=y_{A,t}$, the Seeker updates its belief to $\bm{x}^+_t$ with updated mean  $\hat{x}^+_t$ and covariance $ P^+_t $, according to
\begin{subequations}
\label{eq:est_1}
\begin{align}
    \hat{x}^+_t &= \hat{x}_t + L_t^+(y_{A,t} - C_{A,t} \hat{x}_t), \\
    L_t^+ &= P_t C_{A,t}^\top(C_{A,t} P_t C_{A,t}^\top + \sigma_m^2 I)^{-1}, \\
    P^+_t &= (P_t^{-1} + \sigma_m^{-2} C_{A,t}^\top C_{A,t})^{-1}.
\end{align}
\end{subequations}
After updating its belief with its own observation, the Seeker incorporates the compressed observation $ \bm{y}_{B,t} = y_{B,t}$ received from the Supporter. This incorporation leads to a further update of the mean and covariance, resulting in the belief at the next time step $\bm{x}_{t+1}$, represented by $ \hat{x}_{t+1}$ and $ P_{t+1} $, respectively:
\begin{subequations}
\label{eq:est_2}
\begin{align}
    \hat{x}_{t+1} &= \hat{x}^+_t + L_t(y_{B,t} - \Theta_t C_{B,t} \hat{x}^+_t), \\
    L_t &= P^+_t C_{B,t}^\top \Theta_t^\top (\Theta_t C_{B,t} P^+_t C_{B,t}^\top \Theta_t^\top \!\!+ N_t)^{-1}, \\
    P_{t+1} &= ((P^+_t)^{-1} + C_{B,t}^\top \Theta_t^\top N_t^{-1} \Theta_t C_{B,t})^{-1}.
\end{align}
\end{subequations}
This update corresponds to the linear minimum variance estimator, which remains valid even if the underlying distributions deviate from Gaussianity. The recursive structure ensures that both the mean and covariance are propagated forward efficiently in closed form.

It is important to note that the belief covariances $ P^+_t$ and  $P_{t+1}$ are independent of the specific measurement realizations $ y_{A,t}$ and $y_{B,t}$. Rather, they are entirely determined by the observation models $C_{A,0:t}$ and $C_{B,0:t}$, which, in turn, are functions of the paths taken by the Seeker and the Supporter. As a result, both agents can compute the belief covariances independently knowing only the trajectories $ \textup{p}_{A,0:t} $ and $ \textup{p}_{B,0:t}$.
In the following sections, we leverage this property to show that the Seeker can compute the same optimal compression matrix $ \Theta_t^*$ and $N_t^*$ that the Supporter chooses. This ability of the Seeker removes the need for the Supporter to explicitly transmit either the compression matrix or the quantization parameters, further reducing communication overhead.

\subsection{Path-Planning Module}
\label{sect:planner}
Let $(V,E)$ represent the graph associated with the grid environment $M$, where $V=\{1, \dots, d\}$ denotes the set of vertices and $E$ the set of edges. 
Each vertex in $V$ corresponds to a specific cell in $M$ (with a slight abuse of notation, we will use $\textup{p}$ to denote both cell positions and graph vertices).
Two vertices are deemed connected if they are immediate neighbors in $M$. 
We assume that the cost incurred by traversing through a cell is proportional to its occupancy value (difficulty to traverse) plus a constant (movement penalty). 
The cost of traversing a vertex is therefore given by:
\begin{align}
\label{eq:cost_def}
    c_{\epsilon}(\textup{p})=\begin{cases}
        x(\textup{p}) + a & \text{if}\ \textup{p} \in \textup{P}_{\epsilon},\\
        d(\epsilon+a) & \text{otherwise},
    \end{cases}
\end{align}
where $x(\textup{p}) \in [0,1]$ is the traversability value of the cell at position $\textup{p}$, and $a$ is a constant cost for traversing a cell. 
$\textup{P}_{\epsilon} = \{ \textup{p} \in V : x(\textup{p}) \leq \epsilon\}$ designates the set of cells meeting a feasibility condition, where $\epsilon \in [0, 1]$ is a scalar that defines cell feasibility.
Let $\Pi_t$ denote the set of paths with the first element being the Seeker's current position $\textup{p}_{A,t}$ and the last element being its destination location $\textup{p}_{D}$. 
Then, the optimal path is given by:
\begin{align}
\label{eq:optimal_path}
    \pi_t^* = \argmin_{\pi \in \Pi_t} \sum_{\textup{p} \in \pi} c_{\epsilon}(\textup{p}).
\end{align}
When all elements of the path $\pi$ is in $\textup{P}_{\epsilon}$, it is referred to as an $\epsilon$-feasible path.
We set the second scale of \eqref{eq:cost_def} to $d(\epsilon+a)$, a value larger than the cost of any feasible path. This assignment effectively excludes infeasible vertices from consideration, unless no $\epsilon$-feasible path exists.
The optimal path $\pi^*$ can be computed using various graph search algorithms, such as Dijkstra, $A^*$, or $D^*$ \cite{stentz1994optimal}. 
$D^*$ is particularly advantageous, as it supports dynamic re-planning when the map is updated incrementally, based on new observations from onboard sensors. 
This dynamic re-planning capability eliminates the need to re-plan from scratch as new information becomes available.

The Seeker does not know the exact traversability values $[x]_i$ at time $t$. 
Instead, it computes an approximation of $c_{\epsilon}(\textup{p})$ based on the estimated values $\hat{x}^+_t$. 
However, the best estimate $\hat{x}^+_t$ computed by the RLS updates \eqref{eq:est_1} and \eqref{eq:est_2}  might not lie within the $[0, 1]$ interval. 
Thus, the Seeker uses the projected value 
\begin{align*}
     \tilde{x}_t = \max( \min(\hat{x}^+_{t}, 1), 0),
\end{align*}
in place of $x$ in \eqref{eq:cost_def} and in the definition of $\textup{P}_{\epsilon}$ to approximate $c_{\epsilon}(\textup{p})$. 
Then, it finds the shortest path by solving \eqref{eq:optimal_path} based on the approximated $c_{\epsilon}(\textup{p})$. 
The optimal path $\pi_t^*$ is then sent to the Supporter to design the optimal compression.


\section{Communication-Aware Compression Design}
\label{sec:compression}

In this section, we introduce a function to formally quantify the importance (or informativeness) of map cells, used in the Utility Module, and a function that approximates the communication cost incurred in transmitting a compressed map. Based on these functions, we then formulate the compression design problem addressed by the Compression Scheme Module.

\subsection{Utility Module}

The utility of a compression is defined based on how much it can reduce the Seeker's uncertainty about the traversability values. 
In other words, the compression $\Theta_t$ is effective if it results in a small estimation covariance $P_{t+1}$. Not all cells have the same level of importance for path planning. 
For instance, the cells on the optimal path $\pi^*_t$ or in its vicinity are often more critical than those far from it.
This increased importance arises because observing cells near the Seeker's optimal path has  more influence on the Seeker's navigation, revealing if $\pi^*_t$ is indeed an $\epsilon$-feasible path.

The work of \cite{psomiadis2024communication} suggested assigning weights to each cell within $M$ based on its proximity to the path $\pi_t^*$. 
They define the importance vector $w_t \in \mathbb{R}^d$, where each element is computed by the normalized Gaussian function
\begin{align}
\label{eq:w_def}
    w_t(\textup{p})= \max_{\textup{p}_{\pi} \in \pi^*_t} \exp{\left(-\frac{\|\textup{p}-\textup{p}_{\pi}\|^2}{2\sigma^2}\right)}, \quad \forall \textup{p} \in M, 
\end{align}
where $\sigma$ is a parameter characterizing the region of interest around the path.
This weight function can be interpreted as follows: For each cell $\textup{p} \in M$, we find the distance to the closest cell on the optimal path $\pi^*_t$. 
$w_t(\textup{p})$ is defined to be proportional to the exponential of the negative distance to $\pi^*$, with decay rate controlled by $\sigma$. The heat map  Fig.~\ref{fig:weight} demonstrates the weights of a cell for a given path (dotted cells).

\begin{figure}[tb]
    \centering        
{\includegraphics[trim = 5cm 3cm 2cm 1.5cm, clip, width=0.75\linewidth]{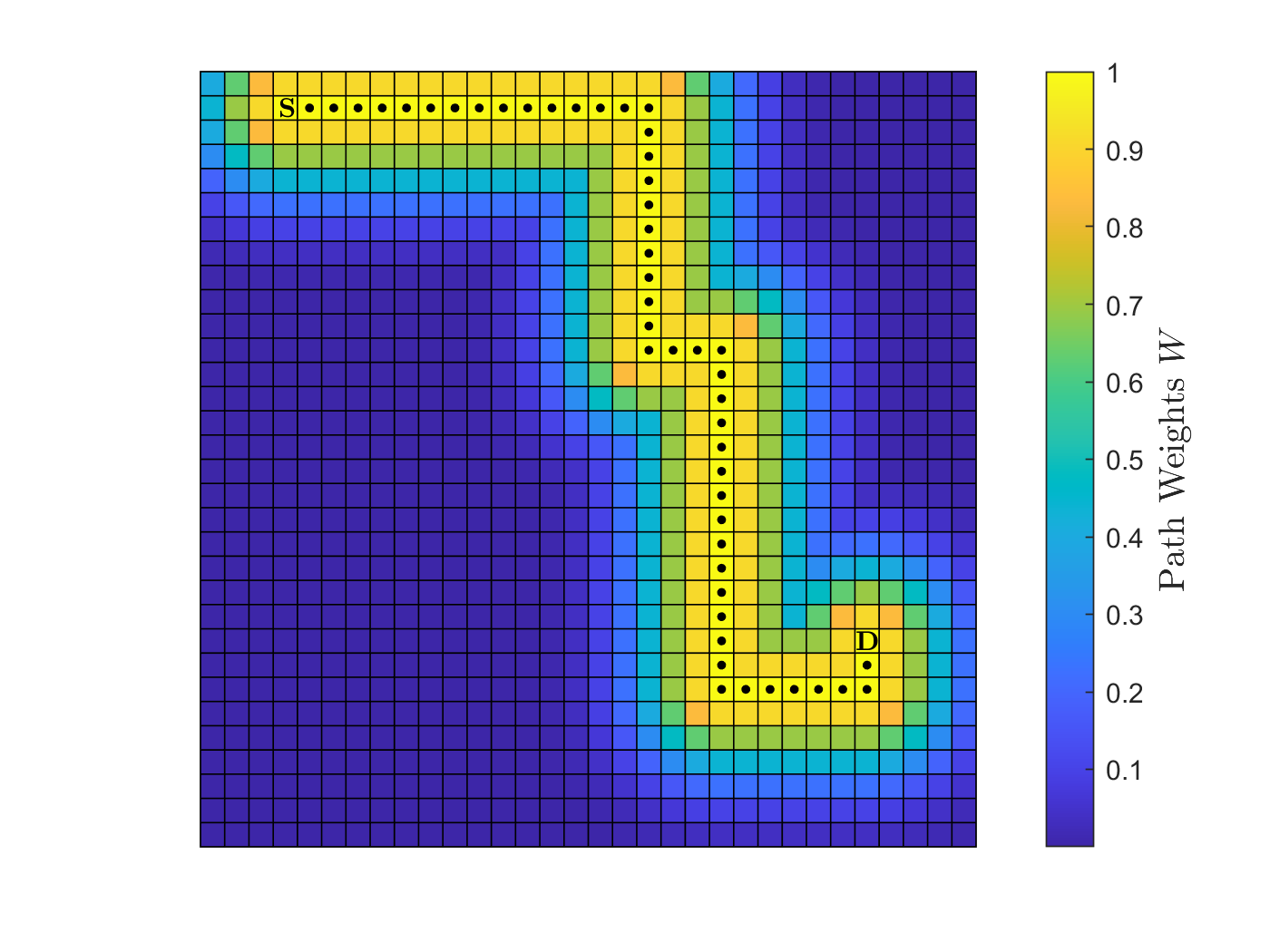}
    \caption{$\pi_t^*$ is marked with dots, where $\textup{S}$ denotes the starting cell and $\textup{D}$ denotes the destination cell. Path weights are computed using \eqref{eq:w_def}. The cells are color-coded based on their weights.}
    \label{fig:weight}}
\end{figure}

We follow \cite{psomiadis2024communication} and adopt the weight definition \eqref{eq:w_def}. 
Defining $W_t \triangleq \textup{diag}(w_t) \in \mathbb{D}_{d}$, we quantify the utility of the compression $\Theta_t$ as 
\begin{align}
    \textup{Utility}(\Theta_t) \triangleq -\mathrm{tr}(W_t P_{t+1}).
\end{align}
This metric can be interpreted such that a compression is better if it results in a smaller $P_{t+1}$ or, equivalently, higher (less negative) $\textup{Utility}(\Theta_t)$.

\subsection{Communication Cost}
To quantify the communication cost, we examine the structure of the entropy coding scheme applied to the quantized signal. As established in Section~\ref{subsec:com}, the compressed observation vector $o_{B,t}$ is a deterministic function of the global map, but from the Seeker's perspective, it is a random variable given by $\bm{o}_{B,t} = \Theta_t C_{B,t} \bm{x}_t^+$. The quantized signal is produced by applying dithered quantization to this random vector, resulting in a discrete random variable $\bm{q}_t$ with support
\[
\textup{supp}(\bm{q}_t)\triangleq \{ q_t \in \mathbb{R}^{d_{\Theta_t}} | [q_t]_i = k_{i,t} \Delta_{i,t}, k_{i,t} \in \mathbb{Z}\}.
\]
A coding scheme assigns to each quantized vector $q_t \in \textup{supp}(\bm{q}_t)$ a binary codeword from an alphabet $\mathcal{A}=\{0, 1, 00, 11, 01, 01, 000, \dots\}$. This is known as a variable-length, lossless source coding scheme \cite[Chapter~5]{cover1999elements}, where each quantized value is mapped to a unique binary representation such that the original value can be perfectly reconstructed from the codeword.
For a given coding scheme and $\bm{q}_t$, the length of the resulting codeword is a random variable $\bm{l}_t$. The expected number of bits required to send the compressed representation over the ECDQ scheme introduced in Section~\ref{subsec:com} is given by the expected codeword length:
\begin{align}
\mathbb{E}[\bm{l}_t] = \sum_{q_t \in \textup{supp}(\bm{q}_t)} \mathbb{P}(\bm{q}_t = q_t)  l(q_t),
\end{align}
where $l(q_t)$ denotes the length of the codeword assigned to $q_t$.

Ideally, we want the coding scheme at time-step $t$ to be adapted to the conditional distribution of $\bm{q}_{t}$ given $\bm{\eta}_t$ so as to minimize  $\mathbb{E}[\bm{l_t}]$.  Intuitively, this objective can be achieved by mapping more probable values of $q_{t}$ to binary code-words with shorter length.  However, designing the optimal coding scheme and computing the exact value of $\mathbb{E}[\bm{l}_t]$ is a non-trivial task in \textit{Information Theory} \cite{cover1999elements}. 
The following proposition establishes upper and lower bounds on the minimum expected codeword length $\mathbb{E}[\bm{l}_t]$ that can be attained by an optimal coding scheme. These bounds are expressed in terms of the mutual information between  $\bm{o}_{B,t}$ and $\bm{y}_{B,t}$, and indicate that $ I(\bm{o}_{B,t}; \bm{y}_{B,t})$ serves as a close approximation to the optimal expected codeword length.
\begin{proposition}
For an optimal, variable-length, lossless source coding scheme, the expected codeword length satisfies 
\begin{align} \label{eq:length}
    I(\bm{o}_{B,t}; \bm{y}_{B,t}) \leq \mathbb{E}[\bm{l}_t] < I(\bm{o}_{B,t}; \bm{y}_{B,t}) + 1.
\end{align}
\end{proposition}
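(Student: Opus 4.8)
The plan is to connect the expected codeword length of an optimal lossless source code to the entropy of the quantized random variable, and then to relate that entropy to the mutual information $I(\bm{o}_{B,t}; \bm{y}_{B,t})$ using the structure of the ECDQ scheme. First I would invoke the standard result from source coding (see \cite[Chapter~5]{cover1999elements}): for any discrete random variable $\bm{q}_t$ conditioned on the dither realization $\bm{\eta}_t$, an optimal variable-length lossless code (e.g., a Shannon or Huffman code designed for the conditional distribution) achieves an expected codeword length $\mathbb{E}[\bm{l}_t \mid \bm{\eta}_t]$ satisfying $H(\bm{q}_t \mid \bm{\eta}_t = \eta_t) \leq \mathbb{E}[\bm{l}_t \mid \bm{\eta}_t = \eta_t] < H(\bm{q}_t \mid \bm{\eta}_t = \eta_t) + 1$. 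Taking expectation over $\bm{\eta}_t$ and using the definition of conditional entropy yields $H(\bm{q}_t \mid \bm{\eta}_t) \leq \mathbb{E}[\bm{l}_t] < H(\bm{q}_t \mid \bm{\eta}_t) + 1$. So it remains to show $H(\bm{q}_t \mid \bm{\eta}_t) = I(\bm{o}_{B,t}; \bm{y}_{B,t})$.

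The key identity to establish is that the discrete conditional entropy $H(\bm{q}_t \mid \bm{\eta}_t)$ equals the mutual information $I(\bm{o}_{B,t}; \bm{y}_{B,t})$ between the compressed signal and its reconstruction. The approach is to use the fundamental property of dithered quantization recalled in Section~\ref{subsec:com}: the reconstruction $\bm{y}_{B,t} = \bm{q}_t - \bm{\eta}_t$ satisfies $\bm{y}_{B,t} = \bm{o}_{B,t} + \bm{n}_t$ with $\bm{n}_t$ independent of $\bm{o}_{B,t}$. Conditioned on $\bm{\eta}_t = \eta_t$, the map $q_t \mapsto y_{B,t} = q_t - \eta_t$ is a deterministic bijection, so $H(\bm{q}_t \mid \bm{\eta}_t) = H(\bm{q}_t - \bm{\eta}_t \mid \bm{\eta}_t)$; I would then write this discrete entropy of the shifted lattice-valued variable in terms of the differential entropy of the continuous reconstruction, using the standard ECDQ argument that $H(\bm{q}_t \mid \bm{\eta}_t) = h(\bm{y}_{B,t}) - h(\bm{y}_{B,t} \mid \bm{o}_{B,t})$. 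The second term simplifies because, given $\bm{o}_{B,t}$, the reconstruction equals $\bm{o}_{B,t} + \bm{n}_t$ with $\bm{n}_t$ a product of independent uniforms of width $\Delta_{i,t}$, so $h(\bm{y}_{B,t} \mid \bm{o}_{B,t}) = h(\bm{n}_t) = \sum_i \log \Delta_{i,t}$, which exactly accounts for the lattice spacing; combining gives $h(\bm{y}_{B,t}) - h(\bm{n}_t) = I(\bm{o}_{B,t}; \bm{y}_{B,t})$. This is essentially the result of \cite{tanaka2016rate, zamir1996lattice} specialized to scalar dithered quantization, and I would cite it rather than rederiving the lattice-entropy bookkeeping in full.

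Chaining these two steps gives $I(\bm{o}_{B,t}; \bm{y}_{B,t}) = H(\bm{q}_t \mid \bm{\eta}_t) \leq \mathbb{E}[\bm{l}_t] < H(\bm{q}_t \mid \bm{\eta}_t) + 1 = I(\bm{o}_{B,t}; \bm{y}_{B,t}) + 1$, which is the claimed bound \eqref{eq:length}. The main obstacle I anticipate is the careful justification of the identity $H(\bm{q}_t \mid \bm{\eta}_t) = I(\bm{o}_{B,t}; \bm{y}_{B,t})$: one must be precise that the dither is shared and known to the decoder (so conditioning on $\bm{\eta}_t$ is legitimate), that $\bm{o}_{B,t}$ has a density so the differential entropies are well-defined, and that the discrete-to-differential entropy conversion for the dithered lattice variable holds exactly (not just asymptotically), which is the content of the ECDQ equality rather than the classical high-rate approximation. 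Everything else — the Shannon code bound and taking expectations — is routine.
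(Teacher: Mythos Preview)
Your proposal is correct and follows exactly the standard ECDQ argument underlying the result: bound $\mathbb{E}[\bm{l}_t]$ by $H(\bm{q}_t\mid\bm{\eta}_t)$ via the Shannon/Huffman source-coding theorem, then identify $H(\bm{q}_t\mid\bm{\eta}_t)=I(\bm{o}_{B,t};\bm{y}_{B,t})$ using the subtractive-dither equivalence. The paper itself does not give a proof but simply defers to \cite{tanaka2016rate}[Theorem~2 and Lemma~1], whose content is precisely the two steps you outlined; so your sketch is in fact more detailed than the paper's own treatment while being the same approach.
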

\begin{proof}
    See \cite{tanaka2016rate}[Theorem~2 and Lemma~1] for the detailed proof.
\end{proof}

Note that $\bm{o}_{B,t}$ and $ \bm{y}_{B,t}$ are non-Gaussian, and hence evaluating the mutual information $I(\bm{o}_{B,t}; \bm{y}_{B,t})$ is rather difficult. 
One approach to evaluate this mutual information was proposed in \cite{silva2009unified} by defining a \textit{jointly Gaussian} version of these variables. Specifically, we define the Gaussian version of  $\bm{n_t}$ with the same mean and covariance as 
\begin{align} \label{eq:nG_def}
   \bm{n}^G_t \sim \mathcal{N}\left(0_{d_{\Theta_t}}, \operatorname*{diag}_{1\leq i \leq d_{\Theta_t}} (\Delta_{i,t}^2/12)\right). 
\end{align}
This Gaussian noise model allows us to bound the mutual information as follows.
\begin{proposition} \label{prop}
With $\bm{n}^G_t$ as in \eqref{eq:nG_def}, we have
\begin{align} \nonumber
   I(\bm{o}_{B,t} ; \bm{y}_{B,t})
   &-I(\bm{o}^G_{B,t} ; \bm{y}^G_{B,t}) \\
   \label{eq:KL_diff}
   &= \mathrm{KL}(\bm{n}_{t} \| \bm{n}^G_{t}) -\mathrm{KL}(\bm{y}_{t} \| \bm{y}^G_{t}),
\end{align}
where $\mathrm{KL}(\bm{n}_{t} \| \bm{n}^G_{t}) = \frac{d_{\Theta_t}}{2} \log \frac{2\pi e}{12}$. Furthermore, since $\mathrm{KL}(\bm{y}_{t} \| \bm{y}^G_{t}) \geq 0$, we have
\begin{align} \label{eq:upper_bound}
   I(\bm{o}_{B,t} ; \bm{y}_{B,t}) 
   &\leq I(\bm{o}^G_{B,t} ; \bm{y}^G_{B,t}) + \frac{d_{\Theta_t}}{2} \log \frac{2\pi e}{12}.
\end{align}
If we assume
\begin{align} \label{eq:Kl_rel}
    \mathrm{KL}(\bm{y}_{t} \| \bm{y}^G_{t}) \leq \mathrm{KL}(\bm{n}_{t} \| \bm{n}^G_{t}),
\end{align}
then, 
\begin{align} \label{eq:kl_res}
    I(\bm{o}^G_{B,t} ; \bm{y}^G_{B,t}) \leq I(\bm{o}_{B,t} ; \bm{y}_{B,t}).
\end{align}
\end{proposition}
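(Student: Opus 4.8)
The plan is to establish the identity \eqref{eq:KL_diff} first, then read off the two consequences \eqref{eq:upper_bound} and \eqref{eq:kl_res} as simple corollaries. The starting point is the observation that, in both the true and the Gaussian models, the reconstruction is an additive-noise channel: $\bm{y}_{B,t} = \bm{o}_{B,t} + \bm{n}_t$ with $\bm{n}_t$ independent of $\bm{o}_{B,t}$, and likewise $\bm{y}^G_{B,t} = \bm{o}^G_{B,t} + \bm{n}^G_t$ with $\bm{n}^G_t$ independent of $\bm{o}^G_{B,t}$. Crucially, by construction the pair $(\bm{o}_{B,t},\bm{o}^G_{B,t})$ and the noises are set up so that $\bm{o}_{B,t}$ and $\bm{o}^G_{B,t}$ have the same distribution (both are the linear image $\Theta_t C_{B,t}\bm{x}^+_t$, whose covariance — but not Gaussianity — is what the Gaussian surrogate matches), and $\bm{n}_t$, $\bm{n}^G_t$ share mean $0$ and covariance $N_t$. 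For an additive-noise channel, $I(\bm{o};\bm{o}+\bm{n}) = h(\bm{o}+\bm{n}) - h(\bm{n})$, so
\begin{align*}
I(\bm{o}_{B,t};\bm{y}_{B,t}) - I(\bm{o}^G_{B,t};\bm{y}^G_{B,t})
= \big(h(\bm{y}_{B,t}) - h(\bm{y}^G_{B,t})\big) - \big(h(\bm{n}_t) - h(\bm{n}^G_t)\big).
\end{align*}

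The next step is to convert each entropy difference into a KL divergence. The standard fact is that if $\bm{x}$ and $\bm{x}^G$ have the same mean and covariance and $\bm{x}^G$ is Gaussian, then $\mathrm{KL}(\bm{x}\|\bm{x}^G) = h(\bm{x}^G) - h(\bm{x})$; this follows because $\int f_{\bm{x}}\log f_{\bm{x}^G}$ depends on $f_{\bm{x}}$ only through its first two moments, which agree with those of $\bm{x}^G$, so $\int f_{\bm{x}}\log f_{\bm{x}^G} = \int f_{\bm{x}^G}\log f_{\bm{x}^G} = -h(\bm{x}^G)$. Applying this with $\bm{x} = \bm{n}_t$, $\bm{x}^G = \bm{n}^G_t$ gives $h(\bm{n}^G_t) - h(\bm{n}_t) = \mathrm{KL}(\bm{n}_t\|\bm{n}^G_t)$, and with $\bm{x} = \bm{y}_{B,t}$, $\bm{x}^G = \bm{y}^G_{B,t}$ (which have matching first two moments, since $\bm{o}$ and $\bm{n}$ do and the Gaussian counterparts are independent with the same covariances) gives $h(\bm{y}^G_{B,t}) - h(\bm{y}_{B,t}) = \mathrm{KL}(\bm{y}_{B,t}\|\bm{y}^G_{B,t})$. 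Substituting both into the display above yields exactly \eqref{eq:KL_diff}. The explicit value $\mathrm{KL}(\bm{n}_t\|\bm{n}^G_t) = \frac{d_{\Theta_t}}{2}\log\frac{2\pi e}{12}$ is then a direct computation: $h(\bm{n}^G_t) = \frac{d_{\Theta_t}}{2}\log(2\pi e) + \frac12\log\det N_t$ while $h(\bm{n}_t) = \sum_i \log\Delta_{i,t} = \frac12\sum_i\log(12\,\Delta_{i,t}^2/12) = \frac12\log\det(12 N_t) = \frac{d_{\Theta_t}}{2}\log 12 + \frac12\log\det N_t$, and subtracting gives the claim (the $\log\det N_t$ terms cancel, which is why the bound is dimension-dependent but not $N_t$-dependent).

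With \eqref{eq:KL_diff} in hand, \eqref{eq:upper_bound} is immediate from $\mathrm{KL}(\bm{y}_{B,t}\|\bm{y}^G_{B,t})\ge 0$ (nonnegativity of KL divergence) together with the evaluated value of $\mathrm{KL}(\bm{n}_t\|\bm{n}^G_t)$, and \eqref{eq:kl_res} is immediate by rearranging \eqref{eq:KL_diff} as $I(\bm{o}_{B,t};\bm{y}_{B,t}) - I(\bm{o}^G_{B,t};\bm{y}^G_{B,t}) = \mathrm{KL}(\bm{n}_t\|\bm{n}^G_t) - \mathrm{KL}(\bm{y}_{B,t}\|\bm{y}^G_{B,t})$ and invoking the hypothesis \eqref{eq:Kl_rel}. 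I expect the main obstacle to be a careful justification of the entropy-to-KL conversion for $\bm{y}_{B,t}$: one must verify that the true reconstruction and its Gaussian surrogate genuinely share the same mean and covariance (this uses independence of $\bm{n}_t$ from $\bm{o}_{B,t}$, established in Section~\ref{subsec:com}, so that $\mathbb{V}[\bm{y}_{B,t}] = \mathbb{V}[\bm{o}_{B,t}] + N_t$ matches $\mathbb{V}[\bm{y}^G_{B,t}]$), and that the relevant differential entropies are finite so the manipulations are well-defined. These are standard but worth stating; the rest is bookkeeping, and the result is essentially an instance of the general "Gaussian maximizes entropy / KL gap" identity from \cite{silva2009unified, tanaka2016rate}, adapted to the ECDQ channel here.
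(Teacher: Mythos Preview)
Your proposal is correct and follows essentially the same route as the paper's proof: reduce each mutual information to $h(\bm{y})-h(\bm{n})$ via the additive independent-noise structure, convert the resulting entropy differences into KL divergences using the fact that $\mathrm{KL}(\bm{x}\|\bm{x}^G)=h(\bm{x}^G)-h(\bm{x})$ when $\bm{x}^G$ is the Gaussian with matching first two moments, and evaluate $\mathrm{KL}(\bm{n}_t\|\bm{n}^G_t)$ explicitly. One small wording slip: $\bm{o}_{B,t}$ and $\bm{o}^G_{B,t}$ need only share the same mean and covariance (not the same distribution), which is exactly what you use later when matching the moments of $\bm{y}_{B,t}$ and $\bm{y}^G_{B,t}$.
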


\begin{proof}
See Appendix~\ref{appendix_zero}. The proof builds on techniques from \cite[Lemma C.1]{silva2009unified},  with a scalar version presented in \cite[Lemma~2]{jung2023optimized}.
\end{proof}

Condition \eqref{eq:Kl_rel} suggests that if the measurement distribution is closer to Gaussian than the uniform noise, the Gaussian approximation underestimates the true mutual information. By leveraging the relation between $\mathbb{E}[\bm{l}_t]$ and $I(\bm{o}_{B,t}; \bm{y}_{B,t})$ from \eqref{eq:length}, and the relation between $I(\bm{o}_{B,t}; \bm{y}_{B,t})$ and $I(\bm{o}^G_{B,t}; \bm{y}^G_{B,t})$ from \eqref{eq:upper_bound} and \eqref{eq:kl_res}, we adopt $I(\bm{o}^G_{B,t}; \bm{y}^G_{B,t})$ as the communication cost and refer to it as bit-rate, which can be computed as \begin{align} \nonumber
    &I(\bm{o}^G_{B,t} ; \bm{y}^G_{B,t}) = h(\bm{y}^G_{B,t})- h(\bm{y}^G_{B,t}|\bm{o}^G_{B,t})\\
    \nonumber 
    &=h(\bm{y}_{B,t}^G)- h(\bm{o}_{B,t}^G+ \bm{n_t}^G|\bm{o}_{B,t}) \\
    \nonumber
    &=h(\Theta_t C_{B,t} \bm{x}^{+G}_t+ \bm{n_t}^G)-h(\bm{n_t}^G)\\ \label{eq:mutal_first}
    &=\!\frac{1}{2}\!\log\!\det(\Theta_t C_{B,t} P_t^+ C_{B,t}^\top \Theta_t^\top\!\!\!+\!N_{t}\!) \!-\! \frac{1}{2}\! \log\!\det(N_{t}).
\end{align}

\subsection{Compression Scheme Module}

The encoder's role is to design the optimal compression strategy of the Supporter, focusing on both navigation and communication aspects.
This design involves considering the path weights $w_t$, as defined in \eqref{eq:w_def}, and penalizing compressions based on their required transmission bit-rates.
At each time step $t$, the Supporter optimally designs the compression and quantization by minimizing the weighted sum  
\begin{align} \label{eq:alpha_def}
   -\textup{Utility}(\Theta_t) + \alpha I(\bm{o}^G_{B,t} ; \bm{y}^G_{B,t}), 
\end{align}
where $\alpha$ is the weight on the bit-rate. 
More precisely, the Supporter solves the following optimization problem:
\begin{subequations}
\label{eq:opt_main}
    \begin{align} \nonumber
     \min \ \  &\mathrm{tr}(W_t P_{t+1})+\frac{\alpha}{2} \log\! \det(\Theta_t C_{B,t} P_t^+ C_{B,t}^\top  \Theta_t^\top\!\!+ N_{t})\\
     &\qquad \qquad \quad -  \frac{\alpha}{2} \log\!\det(N_{t}),
     \label{eq:nonconvex_obj}
     \\
    \text{s.t.,} \ \  & P_{t+1}\! =\! ((P_{t}^{+})^{-1}\!+ C_{B,t}^\top  \Theta_t^\top  N_{t}^{-1} \Theta_t C_{B,t} )^{-1}, \label{eq:nonlinear_equality}
\end{align}
\end{subequations}
where $P_{t+1} \in \mathbb{S}_{+}^{d}, N_{t} \in \mathbb{D}_{d_{\Theta_t}}$, $\Theta_t \in \mathbb{R}^{d_{\Theta_t}\times d_{B}}$ and $d_{\Theta_t} \in \{0, 1, \dots, d_B\}$. The
Seeker also solves \eqref{eq:opt_main}, prior to receiving $y_{B,t}$, to infer what $\Theta_t^*$ the Supporter will use, and then executes the estimation step \eqref{eq:est_2} after receiving the compressed map sent by the Supporter.  
Problem \eqref{eq:opt_main} is non-convex due to the nonlinear equality constraint \eqref{eq:nonlinear_equality} and nonlinear objective function \eqref{eq:nonconvex_obj}. Moreover, the integer-valued size $d_{\Theta_t}$ is also an optimization variable. 
In the next section, we propose an efficient method to solve \eqref{eq:opt_main} to global optimality by reformulating this mixed-integer non-convex problem as a convex log-det minimization problem.


\section{Main Results}
To solve the non-convex Problem~\eqref{eq:opt_main}, we suggest reformulating it in terms of the reordered state $x_{R,t}=C_{R,t} x$ introduced in Section~\ref{subsec:obs}. 
Specifically, we consider the Seeker's best estimate of $x_{R,t}$ at time $t$ prior to receiving $ \bm{y}_{B,t} =y_{B,t}$ by $\hat{\bm{x}}^+_{R,t} $ with $\mathbb{E}[\hat{\bm{x}}^+_{r,t}]= \hat{x}^+_{R,t} = C_{R,t} \hat{x}_{t}^+$ and 
$ \mathbb{V}[\hat{\bm{x}}^+_{R,t}] = P^+_{R,t} = C_{R,t} \hat{P}_{t}^+ C_{R,t}^\top$, and after receiving $ \bm{y}_{B,t} =y_{B,t}$ by $\hat{\bm{x}}_{R,t+1} $ with $ \mathbb{E}[\hat{\bm{x}}_{R,t+1}] = \hat{x}_{R,t+1} = C_{R,t} \hat{x}_{t+1}$ and $ \mathbb{V}[\hat{\bm{x}}_{R,t+1}]=  P_{R,t+1} = C_{R,t} \hat{x}_{t+1} C_{R,t}^\top$, respectively. 
We decompose the error covariances to
\begin{align*}
    P_{R,t}^+= \begin{bmatrix}
        P_{BB,t}^+ & P_{BO,t}^+\\
        P_{OB,t}^+ & P_{OO,t}^+
    \end{bmatrix}, \ 
    P_{R,t+1}= \begin{bmatrix}
        P_{BB,t+1} & P_{BO,t+1}\\
        P_{DO,t+1} & P_{OO,t+1}
    \end{bmatrix},
\end{align*} 
where 
$P^{+}_{BB,t}=C_{B,t} P^+_{t} C_{B,t}^\top$,
$P^{+}_{OO,t} = C_{O,t} P^+_{t} C_{O,t}^\top$, 
and $P^{+}_{BO,t} = (P^{+}_{DO,t})^\top = C_{O,t} P^+_{t} C_{D,t}^\top$. The elements of $P_{R,t+1}$ are defined similarly. 
We accordingly define the diagonal matrix $W_{R,t}\triangleq C_{R,t} W_t C_{R,t}^\top$, which can be broken into $W_{R,t} = \mathrm{diag}(W_{BB,t}, W_{OO,t})$.

\begin{theorem}
 \label{theo:one}
 The following claims hold.
 \begin{enumerate}[label=\roman*)]
     \item $P_{BB,t+1}$ and $P_{OO,t+1}$ can be computed as
    \begin{align} \label{eq:p_bb}
        P_{BB,t+1} &= ((P_{BB,t}^{+})^{-1}\!+ \Theta_t^\top N_{t}^{-1} \Theta_t)^{-1},\\ \label{eq:p_dd}
         P_{OO,t+1} &= S_{t}+ Q_{t} P_{BB,t+1} Q_{t}^\top,
    \end{align}
    where $Q_{t} \triangleq P_{OB,t}^+ (P_{BB,t}^+)^{-1}$, and $S_{t} \triangleq P_{OO,t}^{+}- Q_{t} P_{BB,t}^{+} Q_{t}^\top$.
    \item The utility  
    $\mathrm{tr}(W_t P_{t+1})$ can be computed as
    \begin{align} \nonumber
        \mathrm{tr}(&W_t P_{t+1}) = \mathrm{tr}(W_{R,t} P_{R,t+1})\\ \label{eq:trace}
        &=\mathrm{tr}(\tilde{W}_{BB,t} P_{BB,t+1}) + \mathrm{tr}(W_{OO,t}S_{t}),
        \end{align} 
        where $\tilde{W}_{BB,t} \triangleq W_{BB,t} + Q_{t}^\top W_{OO,t} Q_{t}$.
    \item The bit-rate $I(\bm{o}^G_{B,t} ; \bm{y}^G_{B,t})$ can be computed as 
     \begin{align*}
       \hspace{-0.3cm} I(\bm{o}^G_{B,t} ; \bm{y}^G_{B,t}) \!=\! \frac{1}{2} \log\!\det( P_{BB,t}^+)\!-\!\frac{1}{2} \log\!\det( P_{BB,t+1}).
    \end{align*}
 \end{enumerate}
\end{theorem}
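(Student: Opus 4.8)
The plan is to exploit the fact that $C_{R,t}$ is a permutation (hence unitary) matrix, so that all quantities transform cleanly under the reordering $x \mapsto x_{R,t} = C_{R,t} x$, and then to reduce each of the three claims to block-matrix identities on the $2\times 2$ partition $P_{R,t+1} = \begin{bmatrix} P_{BB,t+1} & P_{BO,t+1}\\ P_{OB,t+1} & P_{OO,t+1}\end{bmatrix}$.

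For claim (i): I would start from the estimation update \eqref{eq:est_2c}, i.e.\ $P_{t+1}^{-1} = (P_t^+)^{-1} + C_{B,t}^\top \Theta_t^\top N_t^{-1} \Theta_t C_{B,t}$, and conjugate by $C_{R,t}$. Since $C_{R,t} C_{B,t}^\top = [I_{d_B}\ 0]^\top$ (because $C_{B,t}$ is the top block of $C_{R,t}$ and $C_{R,t}$ is unitary), this turns the information-form update into $P_{R,t+1}^{-1} = (P_{R,t}^+)^{-1} + \begin{bmatrix}\Theta_t^\top N_t^{-1}\Theta_t & 0\\ 0 & 0\end{bmatrix}$. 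Applying the block matrix inversion lemma (Schur complement) to invert the right-hand side then yields \eqref{eq:p_bb} for the $(B,B)$ block directly, and the $(O,O)$ block comes out as a Schur-complement expression; rewriting that expression using the definitions $Q_t = P_{OB,t}^+ (P_{BB,t}^+)^{-1}$ and $S_t = P_{OO,t}^+ - Q_t P_{BB,t}^+ Q_t^\top$ gives \eqref{eq:p_dd}. The cleanest route is probably to note that $S_t$ is exactly the Schur complement of $P_{BB,t}^+$ in $P_{R,t}^+$, and that adding a term only to the $(B,B)$ information block leaves this Schur complement invariant while replacing $P_{BB,t}^+$ by $P_{BB,t+1}$ in the "correction" term $Q_t(\cdot)Q_t^\top$.

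For claim (ii): since $C_{R,t}$ is a permutation matrix, $\mathrm{tr}(W_t P_{t+1}) = \mathrm{tr}(C_{R,t} W_t C_{R,t}^\top\, C_{R,t} P_{t+1} C_{R,t}^\top) = \mathrm{tr}(W_{R,t} P_{R,t+1})$, which is the first equality. Then expand the block trace as $\mathrm{tr}(W_{BB,t} P_{BB,t+1}) + \mathrm{tr}(W_{OO,t} P_{OO,t+1})$ (the off-diagonal blocks of $W_{R,t}$ vanish since it is diagonal), substitute \eqref{eq:p_dd} for $P_{OO,t+1}$, and use cyclicity of the trace on $\mathrm{tr}(W_{OO,t} Q_t P_{BB,t+1} Q_t^\top) = \mathrm{tr}(Q_t^\top W_{OO,t} Q_t\, P_{BB,t+1})$ to collect the $P_{BB,t+1}$ terms into $\tilde W_{BB,t} = W_{BB,t} + Q_t^\top W_{OO,t} Q_t$, leaving $\mathrm{tr}(W_{OO,t} S_t)$ as the remainder.

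For claim (iii): start from \eqref{eq:mutal_first}, $I(\bm{o}^G_{B,t};\bm{y}^G_{B,t}) = \frac12\log\det(\Theta_t C_{B,t} P_t^+ C_{B,t}^\top \Theta_t^\top + N_t) - \frac12\log\det(N_t)$, recognize $C_{B,t} P_t^+ C_{B,t}^\top = P_{BB,t}^+$, and rewrite the first log-det using the matrix determinant lemma: $\det(\Theta_t P_{BB,t}^+ \Theta_t^\top + N_t) = \det(N_t)\det(P_{BB,t}^+)\det\big((P_{BB,t}^+)^{-1} + \Theta_t^\top N_t^{-1}\Theta_t\big)$. By \eqref{eq:p_bb} the last factor is $\det(P_{BB,t+1}^{-1})$, so the whole expression collapses to $\frac12\log\det(P_{BB,t}^+) - \frac12\log\det(P_{BB,t+1})$, as claimed.

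The main obstacle I anticipate is purely bookkeeping in claim (i): correctly tracking how the $(O,O)$ block of the inverse changes when a rank-contribution is added only to the $(B,B)$ information block, and showing the $S_t$ term is genuinely unchanged. The identity is standard (it is the statement that the Schur complement of the top-left block is unaffected by updates confined to that block in information form), but writing it so that the definitions of $Q_t$ and $S_t$ fall out exactly as stated requires care with the block-inversion formula. Everything else is routine use of trace cyclicity, the determinant lemma, and unitarity of $C_{R,t}$.
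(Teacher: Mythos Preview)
Your proposal is correct. Claims (ii) and (iii) are handled exactly as in the paper (trace cyclicity plus block-diagonality of $W_{R,t}$, and the matrix determinant lemma combined with \eqref{eq:p_bb}, respectively). For claim (i) you take a slightly different but equivalent route: the paper forms the joint covariance of $[\hat{\bm{x}}_{B,t}^{+\top},\hat{\bm{x}}_{O,t}^{+\top},\bm{y}_{B,t}^\top]^\top$ and conditions on $\bm{y}_{B,t}$ in covariance form (then applies Woodbury), whereas you work directly in information form, conjugate the update $P_{t+1}^{-1}=(P_t^+)^{-1}+C_{B,t}^\top\Theta_t^\top N_t^{-1}\Theta_t C_{B,t}$ by $C_{R,t}$, and block-invert. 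Both amount to the same Schur-complement identities; your organization makes the invariance of $S_t$ (as the Schur complement of $P_{BB,t}^+$ in $P_{R,t}^+$) slightly more transparent, while the paper's joint-conditioning derivation avoids the intermediate step of identifying information-matrix blocks. The anticipated bookkeeping obstacle you flag is indeed the only nontrivial point, and your sketch (identifying $Q_t=-C^{-1}B^\top$ and $S_t=C^{-1}$ in information-block notation) handles it correctly.
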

\begin{proof}
    See Appendix~\ref{appendix_one}.
\end{proof}

Using the results of Theorem~\ref{theo:one}, we may rewrite \eqref{eq:opt_main} in terms of the variables $P_{BB,t+1}$, $\Theta_t$, $N_{t}$ and $d_{\Theta_t}$ as
\begin{subequations}
\label{eq:opt_pbb}
    \begin{align}\label{eq:opt_pbb_a}
    \min \ \ &\mathrm{tr}(\tilde{W}_{BB,t} P_{BB,t+1})\!-\!\frac{\alpha}{2} \!\log\!\det(P_{BB,t+1}),\\ \label{eq:opt_pbb_b}
    \text{s.t.,} \ \  & P_{BB,t+1}\! =\! ((P_{BB,t}^{+})^{-1}\!+  \Theta_t^\top N_{t}^{-1} \Theta_t)^{-1},
\end{align}
\end{subequations}
where the constants 
$\mathrm{tr}(W_{OO,t}S_{t})$ and 
$\frac{\alpha}{2} \log \det( P^+_{BB,t})$ are omitted from the objective function \eqref{eq:opt_pbb_a}.
A mathematically similar problem to \eqref{eq:opt_pbb} appeared in the seminal paper \cite{tanaka2016semidefinite}. 
Following the method proposed in \cite{tanaka2016semidefinite}, we transform \eqref{eq:opt_pbb} into an optimization problem in terms of $P_{BB,t+1}$ only.  
This reformulation is achieved by replacing the nonlinear equality constraint \eqref{eq:opt_pbb_b} with a linear inequality constraint $ P_{BB,t} \preceq P_{BB,t-1}^+$. 
This replacement eliminates all the variables $\Theta_t$, $N_{t}$ and $d_{\Theta_t}$ from \eqref{eq:opt_pbb} yielding
\begin{subequations}
\label{eq:opt_convex}
    \begin{align}
    \min_{P_{BB,t+1}} \ \  &\mathrm{tr}(\tilde{W}_{BB,t} P_{BB,t+1})\!-\!\frac{\alpha}{2}\! \log\!\det(P_{BB,t+1}),\\ \label{eq:opt_con_cons}
    \text{s.t.,} \quad &P_{BB,t+1} \preceq P^+_{BB,t}.
\end{align}
\end{subequations}
Equations \eqref{eq:opt_pbb} and \eqref{eq:opt_convex} are mathematically equivalent, since eliminated variables can be constructed from any $P_{BB,t+1}$ satisfying \eqref{eq:opt_pbb_b} through
\begin{align}
\label{eq:theta_calc}
\Theta_t^\top  N_{t}^{-1} \Theta_t = P_{BB,t+1}^{-1} - (P^+_{BB,t})^{-1},
\end{align}
as follows. First, we compute the eigenvalue decomposition of the RHS of \eqref{eq:theta_calc} as $V_t^\top \Lambda_t V_t$. Due to the constraint \eqref{eq:opt_con_cons}, all eigenvalues are non-negative. Then, we decompose the diagonal matrix $\Lambda_t$ into two parts based on the strictly positive eigenvalues and zero eigenvalues as $\Lambda_t=\textup{blkdiag}(\Lambda_{t,+}\!\succ 0, 0)$, and, accordingly, decompose $V_t$ as $V_t=[V_{t,+}, V_{t, \mathrm{zero}}]$. It can be shown that $\Theta_t=V_{t,+}$, $\Sigma_{N,t}= \Lambda_{t,+}^{-1}$, and $d_{\Theta_t}= \mathrm{rank}(\Lambda_{t,+})$ (i.e., the number of strictly positive eigenvalues of $\Lambda_t$) satisfy \eqref{eq:theta_calc} and constitute feasible solutions for \eqref{eq:opt_pbb}.  To summarize this process, we define the reduced singular decomposition operator for any symmetric positive semi-definite matrix $A$ as $\textup{RSD}(A)= (V_{+}, \Lambda_{+})$, where $A=V^\top \Lambda V$, and $V_{+}$ and $\lambda_{+}$ retain only the components associated with strictly positive eigenvalues.  Then, Equation \eqref{eq:theta_calc} yields 
\begin{align*}
   (\Theta_t^\top,  N_{t}^{-1})  = \textup{RSD} (P_{BB,t+1}^{-1} - (P^+_{BB,t})^{-1}),
\end{align*}
with $d_{\Theta_t}= \mathrm{rank}(N_t)$. 

Problem \eqref{eq:opt_convex} is a determinant maximization problem subject to linear matrix inequality constraints.
As shown by \cite{vandenberghe1998determinant},  such problems are convex and can be efficiently solved using off-the-shelf semi-definite programming solvers. After finding $P_{BB,t+1}^*$, the minimizer of \eqref{eq:opt_convex}, the optimal compression $\Theta_t^*$ and optimal quantization $N_{t}^*$ can be constructed as explained above. It is worth noting that the optimal solutions are not unique. For instance, for any nonzero scalar $s$, the pair $(\Theta'_t=s\Theta_t^*, N'_{t}=s^2 N_{t}^*)$ is also optimal because it satisfies
\[
\Theta'^\top (N'_{t})^{-1} \Theta'= {\Theta_t^*}^\top (N_{t}^*)^{-1} \Theta_t^*.
\]
This non-uniqueness is intuitive, as the measurement $ \bm{y}_{B,t} = \Theta_t o_{B,t} +\bm{n}_t= y_{B,t}$ contains the same information as the scaled version $\bm{y}'_{B,t} = s\Theta_t o_{B,t} +s\bm{n}_t= y'_{B,t} = s y_{B,t}$.
Likewise, for any unitary matrix $U \in \mathbb{R}^{d_{\Theta_t} \times d_{\Theta_t}}$, the pair $(\Theta''_t = U \Theta_t^*, N''_{t}=U N_{t}^* U^\top)$ is also optimal. 

Although Problem~\eqref{eq:opt_convex} is convex and can, in principle, be solved efficiently, its high-dimensional nature, due to the size of the local map, may lead to computational inefficiencies in practice. Notably,
Problem \eqref{eq:opt_convex} is an instance of the standard Gaussian rate-distortion problem \cite{cover1999elements}.
It is well-known that this problem recovers the \textit{reverse water-filling} solution as explained in the following theorem.  

\begin{theorem}
\label{theo:two}
Denote the eigen-decomposition of $\tilde{W}_{BB,t}^{1/2}$ $P^+_{BB,t} \tilde{W}_{BB,t}^{1/2}$ by $U_t^\top D_t U_t$, where $D_t =\mathrm{diag}_{1\leq i\leq d_B}( \sigma_{i,t}^2)$ and $U_t$ is an unitary matrix $U_t U_t^\top = U_t^\top U_t=I$.
Then, the optimizer of \eqref{eq:opt_convex} is
 \begin{align} \label{eq:P_opt}
    P_{BB,t+1}^* 
   &= \tilde{W}_{BB,t}^{-1/2} U_t^\top \Phi_t  U_t \tilde{W}_{BB,t}^{-1/2},
\end{align} 
where,
\begin{align*}
    \Phi_t=\mathrm{diag}_{1\leq i \leq d_B} (\min\{ \frac{\alpha}{2}, \sigma_{i,t}^2\}).
\end{align*}
Moreover,
\begin{align} \label{eq:M_def}
   \!\!M_t &\!\triangleq\! {(P_{BB,t+1}^*)}^{-1}\!\!\!-\! (P^+_{BB,t})^{-1}  
   \!\!=\!\!\tilde{W}_{BB,t}^{1/2} U_t^\top \Sigma_t  U_t \tilde{W}_{BB,t}^{1/2},
\end{align}
where,
\begin{align}
\label{eq:sigma_opt}
 \Sigma_t = \operatorname*{diag}_{1 \leq i \leq d_{B}}(\max\{0, \frac{2}{\alpha}- \frac{1}{\sigma_{i,t}^2}\}).
\end{align}
The optimum compression and quantization levels can be computed as 
\begin{align} \label{eq:opt_theta}
    (\Theta_t^*, (N_{t}^*)^{-1}) 
   &=  \textup{RSD}(M_t).
\end{align} 

\end{theorem}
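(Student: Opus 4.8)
## Proof Proposal for Theorem 2

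The plan is to recognize \eqref{eq:opt_convex} as a change-of-variables away from a diagonal Gaussian rate-distortion problem and then invoke the standard reverse water-filling argument, supplying the Lagrangian/KKT details. First I would apply the congruence transformation $\hat{P} \triangleq \tilde{W}_{BB,t}^{1/2} P_{BB,t+1} \tilde{W}_{BB,t}^{1/2}$, which is well-defined assuming $\tilde{W}_{BB,t}\succ 0$ (I would note the degenerate case of zero weights separately, or handle it by a limiting argument). Under this substitution the objective becomes $\mathrm{tr}(\hat{P}) - \frac{\alpha}{2}\log\det(\hat{P})$ plus the constant $\frac{\alpha}{2}\log\det(\tilde{W}_{BB,t})$, and the constraint \eqref{eq:opt_con_cons} becomes $\hat{P} \preceq \tilde{W}_{BB,t}^{1/2} P^+_{BB,t}\tilde{W}_{BB,t}^{1/2}$. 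Diagonalizing the right-hand side as $U_t^\top D_t U_t$ and setting $\bar{P} \triangleq U_t \hat{P} U_t^\top$, the problem reduces to $\min \mathrm{tr}(\bar{P}) - \frac{\alpha}{2}\log\det(\bar{P})$ subject to $0 \prec \bar{P} \preceq D_t$, where $D_t$ is diagonal with entries $\sigma_{i,t}^2$.

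Next I would argue the minimizer $\bar{P}^*$ is diagonal. The cleanest route is Hadamard's inequality: for any feasible $\bar{P}$, the diagonal matrix $\bar{P}_{\mathrm{diag}}$ formed from its diagonal entries is still feasible (since $\bar{P}\preceq D_t$ with $D_t$ diagonal forces each diagonal entry $[\bar{P}]_{ii}\le \sigma_{i,t}^2$, and the diagonal entries of a PSD matrix are nonnegative), has the same trace, and satisfies $\det(\bar{P}_{\mathrm{diag}}) \geq \det(\bar{P})$, hence $-\frac{\alpha}{2}\log\det(\bar{P}_{\mathrm{diag}}) \le -\frac{\alpha}{2}\log\det(\bar{P})$. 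So it suffices to optimize over diagonal $\bar{P} = \mathrm{diag}(p_i)$ with $0 < p_i \le \sigma_{i,t}^2$. This decouples into scalar problems $\min_{0<p_i\le\sigma_{i,t}^2} p_i - \frac{\alpha}{2}\log p_i$; the unconstrained stationary point is $p_i = \alpha/2$, the objective is convex in $p_i$, so the constrained minimizer is $p_i^* = \min\{\alpha/2, \sigma_{i,t}^2\}$. This gives $\bar{P}^* = \Phi_t$, and unwinding the substitutions yields \eqref{eq:P_opt}.

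From there, \eqref{eq:M_def} follows by direct computation: $(P^*_{BB,t+1})^{-1} = \tilde{W}_{BB,t}^{1/2} U_t^\top \Phi_t^{-1} U_t \tilde{W}_{BB,t}^{1/2}$ and $(P^+_{BB,t})^{-1} = \tilde{W}_{BB,t}^{1/2} U_t^\top D_t^{-1} U_t \tilde{W}_{BB,t}^{1/2}$ (using $U_t^\top U_t = I$ and that $U_t^\top D_t U_t = \tilde{W}_{BB,t}^{1/2} P^+_{BB,t} \tilde{W}_{BB,t}^{1/2}$ implies $\tilde{W}_{BB,t}^{-1/2} U_t^\top D_t^{-1} U_t \tilde{W}_{BB,t}^{-1/2} = P^+_{BB,t}$, hence its inverse is as claimed). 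Subtracting gives $M_t = \tilde{W}_{BB,t}^{1/2} U_t^\top (\Phi_t^{-1} - D_t^{-1}) U_t \tilde{W}_{BB,t}^{1/2}$, and entrywise $[\Phi_t^{-1} - D_t^{-1}]_{ii} = \frac{1}{\min\{\alpha/2,\sigma_{i,t}^2\}} - \frac{1}{\sigma_{i,t}^2} = \max\{0, \frac{2}{\alpha} - \frac{1}{\sigma_{i,t}^2}\}$, which is exactly $\Sigma_t$ in \eqref{eq:sigma_opt}. Finally, \eqref{eq:opt_theta} is immediate from the $\textup{RSD}$ construction already established in the text (Equation \eqref{eq:theta_calc} and the paragraph following it), since $M_t$ is symmetric PSD by construction and $\Theta_t^\top N_t^{-1}\Theta_t = M_t$.

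The main obstacle I anticipate is the reduction-to-diagonal step, specifically making sure the Hadamard/majorization argument is airtight when $D_t$ has repeated or vanishing eigenvalues and when $\tilde{W}_{BB,t}$ is only positive semidefinite rather than positive definite; in the latter case the change of variables $\tilde{W}_{BB,t}^{1/2}$ is not invertible and one must either restrict to the range of $\tilde{W}_{BB,t}$ (the complementary subspace contributes nothing to the objective and the constraint is slack there at optimum) or argue by continuity from the strictly positive case. A secondary, more cosmetic point is confirming strict feasibility / constraint qualification so that the scalar convex optimization genuinely attains its claimed minimizer on the half-open interval — but since each scalar objective $p_i - \frac{\alpha}{2}\log p_i$ is strictly convex and coercive as $p_i\to 0^+$, this is routine.
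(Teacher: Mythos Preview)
Your proposal is correct and matches the paper's own proof essentially step for step: the paper also applies the congruence $Q = U_t \tilde{W}_{BB,t}^{1/2} P_{BB,t+1} \tilde{W}_{BB,t}^{1/2} U_t^\top$ (your $\bar P$), invokes Hadamard's inequality to reduce to a diagonal optimizer, solves the decoupled scalar problems to get $q_i^*=\min\{\alpha/2,\sigma_{i,t}^2\}$, and then subtracts the inverses in the same basis to obtain $\Sigma_t$ and hence $M_t$. The only difference is cosmetic---you split the change of variables into two stages and add a remark on the semidefinite edge case for $\tilde W_{BB,t}$, which the paper tacitly assumes away.
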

\begin{proof}
    The proof is provided in \cite[Subsection~IV.D]{tanaka2016semidefinite}. For completeness, the proof is summarized in Appendix~\ref{appendix_two} .
\end{proof}

By defining the set of indices of strictly positive diagonal elements of $\Sigma_t$ by $J_t \triangleq \{1\leq i \leq d_B: \alpha <  2\sigma_{i,t} ^2\}$, equations \eqref{eq:opt_theta} and \eqref{eq:sigma_opt} imply that $d_{\Theta_t^*}=$ dim$(\bm{y}^*_{B,t})= |J_t|$. 
This implication shows that the proposed formulation admits a low-dimensional solution, which is in full agreement with our intuition about compression. 
Moreover,  $d_{\Theta^*_t}$ monotonically decreases as the weight on the bit-rate, i.e., $\alpha$ in Equation \eqref{eq:alpha_def}, increases. 
Hence, the proposed bit-rate metric acts as an approximation of the cardinality of the used compressed representation. 

Using the closed-form expression \eqref{eq:opt_theta}, one may not need to solve the optimization problem \eqref{eq:opt_pbb}, and the optimal compression and quantization can be obtained from the strictly positive eigenvalues of the RHS of \eqref{eq:opt_theta} and the associated eigenvectors. We emphasize that not all elements of $N^*_{t}$ are equal.   
This non-uniformity highlights the flexibility of our formulation, wherein the bit-rate is optimally allocated across the components of the compression: more important components are preserved with higher precision, while less critical ones are compressed more aggressively.

\subsection{Noise in Supporter's Observation}
\label{subsec:noisy}
In this section, we consider the case where the Supporter's observation contains Gaussian noise as
\begin{align} \label{eq:noisy_obs}
\bm{o}_{B,t}= C_{B,t} x + \bm{\mu}_t, \quad \bm{\mu_t} \sim \mathcal{N}(0, V_t \succ 0).
\end{align}
Under this condition, the Seeker receives a measurement of $\bm{y}_{B,t}= \Theta_t C_{B,t} x + \Theta_t \bm{\mu}_t+ \bm{n_t}$.
All the results derived for the noise-less measurement case in Theorem~\ref{theo:one} can be repeated for the noisy measurement case by replacing $\bm{n}_t$ with 
$\Theta_t \bm{\mu}_t+ \bm{n_t}$, and, consequently, $N_{t}$ by 
$\Theta_t V_t \Theta_t^\top +N_{t}$. However, the optimal compression design problem \eqref{eq:opt_pbb}  becomes
\begin{subequations}
\label{eq:opt_noisy}
    \begin{align}
    \min \ \ &\mathrm{tr}(\tilde{W}_{BB,t} P_{BB,t+1})\!-\!\frac{\alpha}{2} \!\log\!\det(P_{BB,t+1}),\\ 
    \nonumber
    \text{s.t.,} \ \  & P_{BB,t+1}\! =\! ((P_{BB,t}^{+})^{-1} +\\ \label{eq:opt_noisy_c}
    & \qquad \qquad \quad \Theta_t^\top ( \Theta_t V_t \Theta_t^\top +N_{t}) ^{-1}\Theta_t)^{-1},
\end{align}
\end{subequations}
where the variables are $P_{t+1} \in \mathbb{S}_{+}^{d}, N_{t} \in \mathbb{D}_{d_{\Theta_t}}$, $\Theta_t \in \mathbb{R}^{d_{\Theta_t}\times d_{B}}$, and $d_{\Theta_t} \in \{0, 1, \dots, d_B\}$. 
Problem \eqref{eq:opt_noisy} cannot be transformed to Problem \eqref{eq:opt_convex}, in general.
The following proposition provides a condition under which \eqref{eq:opt_noisy_c} can be relaxed to \eqref{eq:opt_con_cons}, thereby enabling the transformation to \eqref{eq:opt_convex}.

\begin{proposition}
    Let $P_{BB,t+1}^*$ be the optimizer of \eqref{eq:opt_convex} as given in \eqref{eq:P_opt} in Theorem~\ref{theo:two}, and let $M_t$ be defined accordingly as \eqref{eq:M_def}. Let
    \begin{align} \label{eq:condition}
        M_t \prec V_t^{-1}. 
    \end{align}
     Then, $P_{BB, t+1}^*$ is the optimizer of \eqref{eq:opt_noisy}, and the corresponding compression and quantization parameters are given by
     \[ (\Theta_t^*, (N_{t}^*)^{-1}) 
   =  \textup{RSD}((V_t - V_t M_t V_t)^{-1}-V_t^{-1}).
\] 
\end{proposition}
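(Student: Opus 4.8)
The plan is to treat the convex program \eqref{eq:opt_convex} as a relaxation of \eqref{eq:opt_noisy} and to show that, under \eqref{eq:condition}, its optimizer $P_{BB,t+1}^*$ from \eqref{eq:P_opt} is in fact attainable by a genuine pair $(\Theta_t,N_t)$ feasible for \eqref{eq:opt_noisy}; this forces the two problems to share the same optimal value and minimizer, and reading off $(\Theta_t,N_t)$ from the construction gives the stated formula.

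Step 1 (relaxation). Since $V_t\succ 0$ and $N_t\succ 0$, the matrix $\Theta_t V_t\Theta_t^\top+N_t$ is positive definite, so $\Theta_t^\top(\Theta_t V_t\Theta_t^\top+N_t)^{-1}\Theta_t\succeq 0$; substituting into \eqref{eq:opt_noisy_c} yields $P_{BB,t+1}^{-1}\succeq(P^+_{BB,t})^{-1}$, i.e. every $P_{BB,t+1}$ feasible for \eqref{eq:opt_noisy} also satisfies \eqref{eq:opt_con_cons}. Because \eqref{eq:opt_noisy} and \eqref{eq:opt_convex} carry the identical objective $\mathrm{tr}(\tilde W_{BB,t}P_{BB,t+1})-\tfrac{\alpha}{2}\log\det(P_{BB,t+1})$, the optimal value of \eqref{eq:opt_convex} lower-bounds that of \eqref{eq:opt_noisy}. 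Hence it suffices to exhibit a feasible $(\Theta_t,N_t)$ for \eqref{eq:opt_noisy} that realizes $P_{BB,t+1}^*$.

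Step 2 (achievability). Put $K_t\triangleq(V_t-V_t M_t V_t)^{-1}-V_t^{-1}$ with $M_t$ as in \eqref{eq:M_def}. Factoring $V_t-V_t M_t V_t=V_t^{1/2}(I-V_t^{1/2}M_t V_t^{1/2})V_t^{1/2}$ and noting that \eqref{eq:condition} is equivalent to $V_t^{1/2}M_t V_t^{1/2}\prec I$, a short computation gives $K_t\succeq 0$, so $((\Theta_t^*)^\top,(N_t^*)^{-1})=\textup{RSD}(K_t)$ is well-defined and produces a full-row-rank $\Theta_t^*$, a positive diagonal $N_t^*$, and $d_{\Theta_t}=\mathrm{rank}(K_t)\le d_B$, so all structural constraints of \eqref{eq:opt_noisy} are met. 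It remains to verify \eqref{eq:opt_noisy_c}, i.e. $(\Theta_t^*)^\top(\Theta_t^* V_t(\Theta_t^*)^\top+N_t^*)^{-1}\Theta_t^*=M_t$. Applying the Woodbury identity to $(\Theta_t^* V_t(\Theta_t^*)^\top+N_t^*)^{-1}$ and writing $G\triangleq(\Theta_t^*)^\top(N_t^*)^{-1}\Theta_t^*=K_t$, the left-hand side collapses to $G-G(V_t^{-1}+G)^{-1}G=G(V_t^{-1}+G)^{-1}V_t^{-1}$, and this equals $M_t$ iff $(I-M_t V_t)G=M_t$ — which holds identically because $K_t$ simplifies to $(I-M_t V_t)^{-1}M_t$, again using that $I-M_t V_t$ is invertible under \eqref{eq:condition}. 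Consequently $P_{BB,t+1}^*=((P^+_{BB,t})^{-1}+M_t)^{-1}$ satisfies \eqref{eq:opt_noisy_c}, so $(\Theta_t^*,N_t^*,P_{BB,t+1}^*)$ is feasible for \eqref{eq:opt_noisy}.

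Step 3 (conclusion). By Step 2, $P_{BB,t+1}^*$ is feasible for \eqref{eq:opt_noisy} and attains the optimal value of the relaxation \eqref{eq:opt_convex}, which by Step 1 lower-bounds the optimal value of \eqref{eq:opt_noisy}; hence $P_{BB,t+1}^*$ is optimal for \eqref{eq:opt_noisy}, realized by $((\Theta_t^*)^\top,(N_t^*)^{-1})=\textup{RSD}(K_t)$. I expect the achievability step to be the crux: one must identify the correct closed form $K_t$ for the Fisher information the compressed-and-noisy measurement has to supply, and then push the Woodbury/push-through manipulations through even when $M_t$ (hence $G$) is rank-deficient. This goes through precisely because the only matrices that must actually be inverted, $I-M_t V_t$ and $V_t^{-1}+G$, stay nonsingular; the strict inequality in \eqref{eq:condition} is exactly what guarantees both $K_t\succeq 0$ and the invertibility of $I-M_t V_t$, which is why the result is stated under a strict condition.
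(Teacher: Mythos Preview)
Your proposal is correct and follows essentially the same route as the paper: both arguments hinge on the Woodbury identity to pass between $\Theta_t^\top(\Theta_t V_t\Theta_t^\top+N_t)^{-1}\Theta_t$ and $\Theta_t^\top N_t^{-1}\Theta_t$, with \eqref{eq:condition} guaranteeing the needed invertibility and positive semi-definiteness. Your treatment is in fact more complete---you spell out the relaxation argument (Step~1) and verify $K_t\succeq 0$, both of which the paper leaves implicit; the paper's only shortcut is to pre- and post-multiply $M_t$ by $V_t$ before applying Woodbury, which delivers $\Theta_t^\top N_t^{-1}\Theta_t=(V_t-V_tM_tV_t)^{-1}-V_t^{-1}$ in one line rather than via your push-through reduction to $(I-M_tV_t)G=M_t$.
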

\begin{proof}
    From \eqref{eq:opt_noisy_c}, we have $M_t = \Theta_t^\top ( \Theta_t V_t \Theta_t^\top +N_{t}) ^{-1}\Theta_t$. By pre- and post-multiplying both sides with $V_t$, we obtain
    \begin{align*}
        V_t M_t V_t &= V_t \Theta_t^\top ( \Theta_t V_t \Theta_t^\top +N_{t}) ^{-1}\Theta_t V_t \\
        &= V_t - (V_t^{-1} + \Theta_t^\top N^{-1} \Theta_t)^{-1},
    \end{align*}
    where we have used the Woodbury matrix inversion Lemma \cite{woodbury1950inverting}. Rearranging terms under the condition $V_t\succ V_t M_t V_t$ (or, equivalently, $M_t \prec V_t^{-1}$), gives
    \begin{align*}
        \Theta_t^\top N^{-1} \Theta_t= (V_t - V_t M_t V_t)^{-1}-V_t^{-1},
    \end{align*}
    which completes the proof.
\end{proof}

The matrix $V_t^{-1}$ represents the information content of the original observation model \eqref{eq:noisy_obs}, and thus characterizes the maximum information that can be extracted by any compression \eqref{eq:noisy_obs}. Hence, if $M_t$, which quantifies the information content of the optimally compressed signal obtained from the noise-free formulation (i.e., Problem~\eqref{eq:opt_convex}), satisfies condition \eqref{eq:condition}, then it is achievable under the noisy model, and $M_t$ is a valid optimizer of Problem~\eqref{eq:opt_noisy}.

\subsection{Planning based on Estimated Values}
As discussed in Section~\ref{sect:planner}, path-planning is performed based on the estimated occupancy values $\hat{x}_t$ projected on the interval $[0, 1]$. However, planning based on these estimates can be problematic when the utilized compression $\bm{y} = \Theta x_B + \bm{n}$ with $\bm{n} \sim \mathcal{N}(0, N)$ has low signal-to-noise ratio (SNR) components. We define the SNR matrix as $\textup{SNR} \triangleq {\Theta}^{\top} {N}^{-1} \Theta$, where small eigenvalues correspond to low-SNR components.

\begin{figure}
\begin{subfigure}[b]{0.232\textwidth}
    \centering
    \includegraphics[trim = 8cm 3cm 6cm 2cm, clip, width=\textwidth]{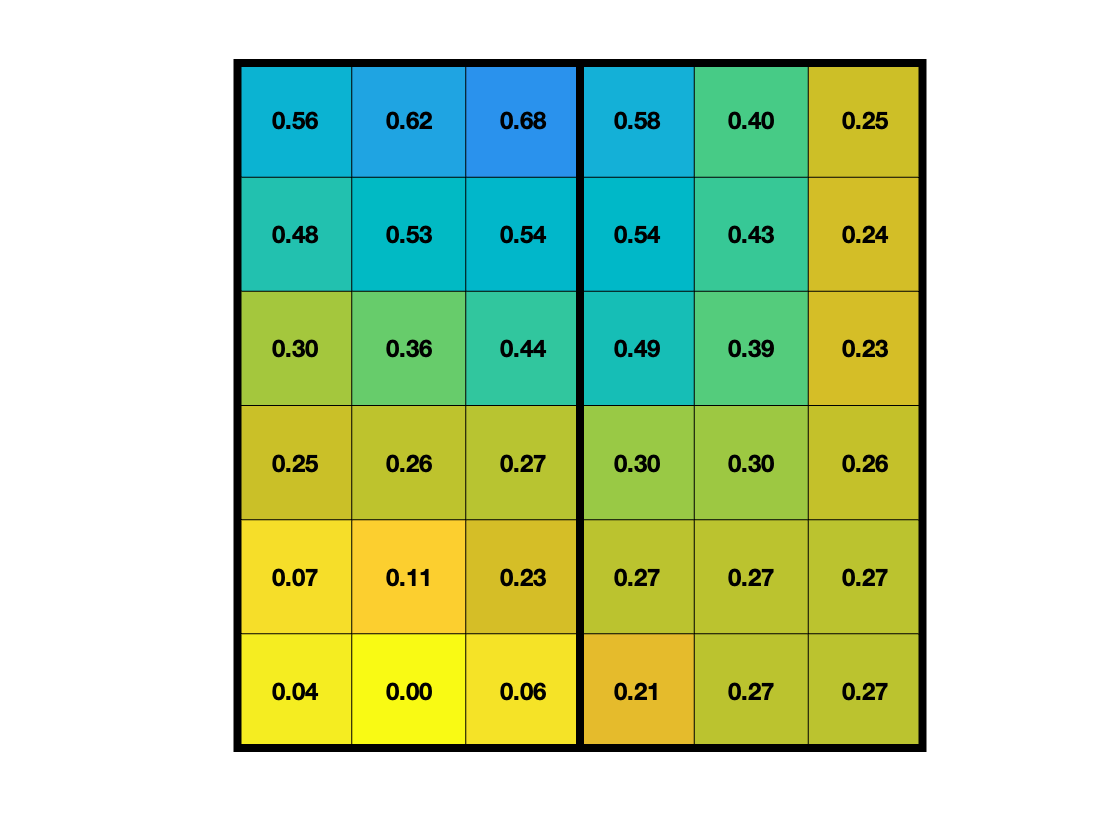}
    \caption{Real occupancy map: $x$. }
    \label{fig:snr_actual}
\end{subfigure}\ \
\begin{subfigure}[b]{0.232\textwidth}
    \centering
\includegraphics[trim = 8cm 3cm 6cm 2cm, clip, width=\textwidth]{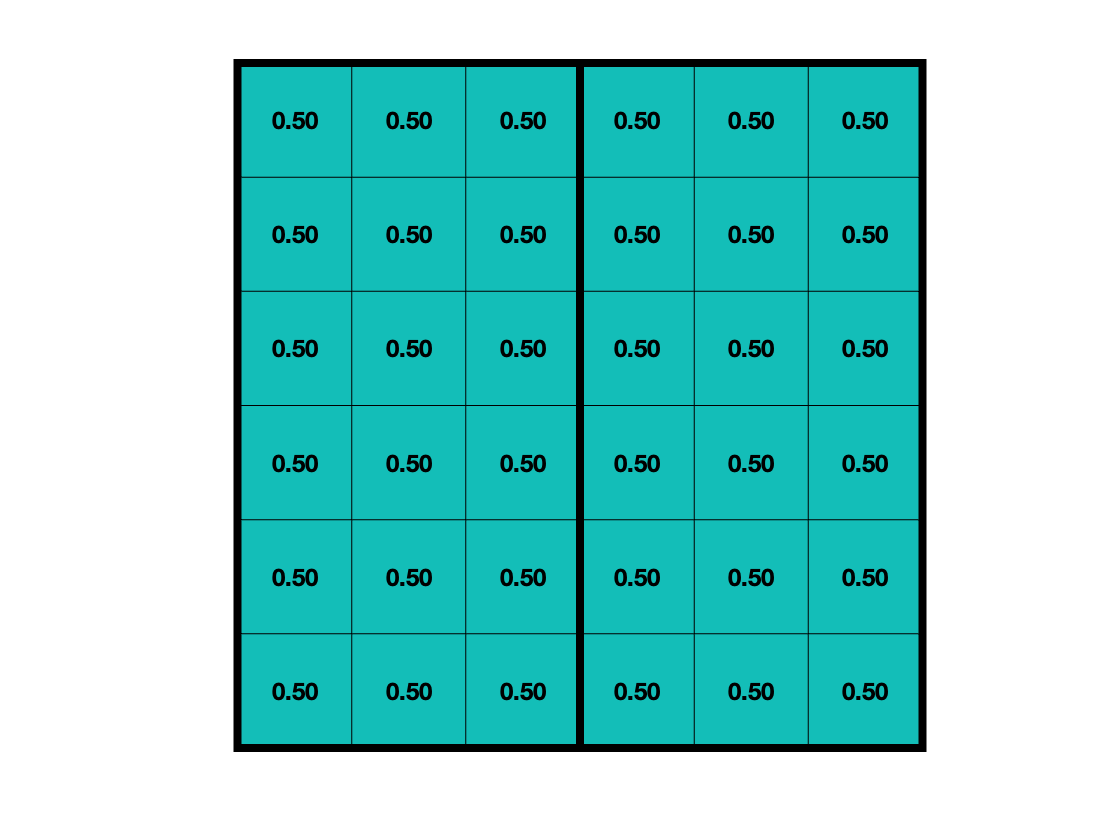}
    \caption{Initial belief: $\tilde{x}_0 = 0.5 \bm{1}_{49}$.}
    \label{fig:snr_initial}
\end{subfigure}

\vspace{0.5em}

\begin{subfigure}[b]{0.232\textwidth}
    \centering \includegraphics[trim = 8cm 3cm 6cm 2cm, clip, width=\textwidth]{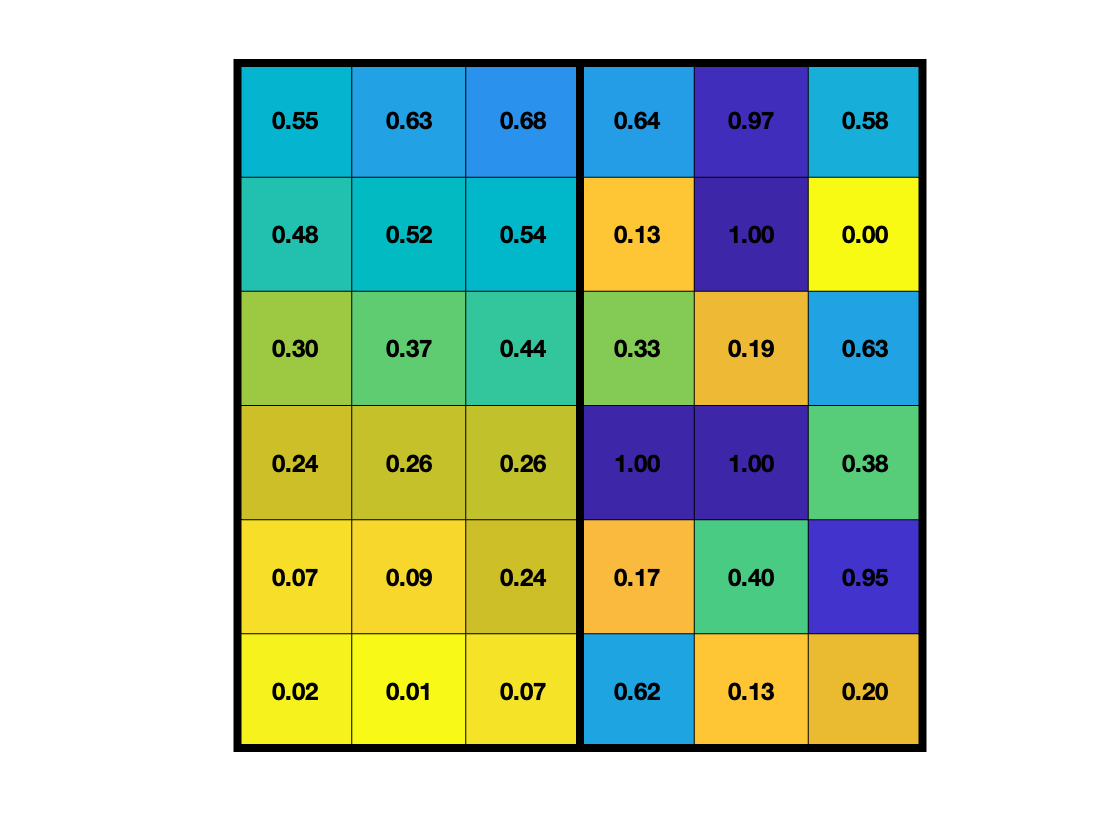}
    \caption{New belief via SRD.}
    \label{fig:snr_projeted}
\end{subfigure}\ \ 
\begin{subfigure}[b]{0.232\textwidth}
    \centering
\includegraphics[trim = 8cm 3cm 6cm 2cm, clip, width=\textwidth]{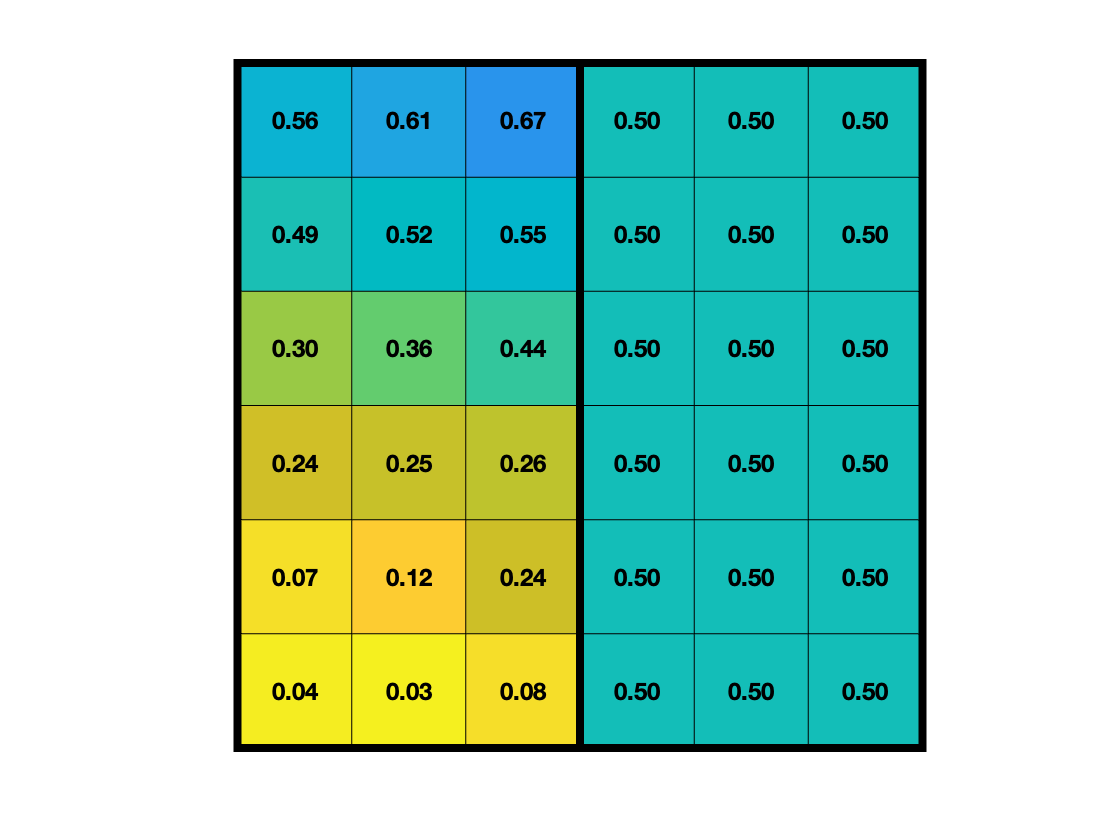}
    \caption{New belief via MSRD.}
    \label{fig:snr_modified}
\end{subfigure}
\caption{Occupancy values of a $6 \times 6$ grid-world map. (a) Real occupancy values of the map (b) Initial estimated occupancy values (c) Estimated and projected occupancy values after receiving mixed of low and high SNR compressions (d) Estimated and projected occupancy values after filtering the low SNR compressions via using the operator MSRD.}
\label{fig:snr}
\end{figure}


To illustrate the issue, consider the example in Fig.~\ref{fig:snr}.
Fig.~\ref{fig:snr}(\subref{fig:snr_actual}) displays a $6 \times 6$ occupancy map $x \in [0, 1]^{36}$, where a lighter color indicates a lower occupancy value. We initialize the Seeker’s belief with uniform values $\hat{x}_0 = 0.5 \bm{1}_{36}$ and variance $P_0 = I_{36}$, as shown in Fig.~\ref{fig:snr}(\subref{fig:snr_initial}). We assume that the Supporter observes the entire map, i.e., $C_{B,1} = I_{36}$, and transmits a compressed representation using the compression matrix $\Theta_1=\mathrm{diag}(100 \times I_{18}, I_{18})$ and the Gaussian quantization error $\bm{n}_1 \sim \mathcal{N}(0, I_{36})$.  Due to the diagonal structure of $\Theta_1$, this compression scheme corresponds to $36$ independent scalar measurements, one per cell. The cells in the left section of the map (first 18 entries) have an SNR of   $100$, which is significantly greater than the SNR of $1$ for the right section (last 18 entries). 

We compute $\hat{x}_1$ using the update rule \eqref{eq:est_2} and then project it onto the interval $[0,1]$, resulting in the estimated map shown in Fig.~\ref{fig:snr}(\subref{fig:snr_projeted}).
Comparing this figure with the ground-truth map in Fig.~\ref{fig:snr}(\subref{fig:snr_actual}), we observe that the left side of 
$\hat{x}_1$, corresponding to high-SNR measurements, closely matches the true occupancy values. In contrast, the right side, associated with low-SNR measurements, shows significant deviation.
Additionally, the simulation results reveal that different realizations of $\bm{n}_{1}$ can lead to different values of $\hat{x}_1$ in the right side with considerable variation.
This variability arises because the initial estimation has low precision (high covariance), and the measurement $\bm{y}_{B,t}$ is also of low precision (low SNR) on the right side of the map. As a result, combining them results in an estimate with high covariance $P_1$ and a widely varying $\hat{x}_1$ in that region.

To address this challenge, we adopt a pruning heuristic that excludes components corresponding to small eigenvalues of the SNR matrix. Specifically, we define a modified RSD operator, denoted by $ \textup{MRSD}(A, \tau)$, which behaves like $ \textup{RSD}$ but retains only the components associated with eigenvalues exceeding a predefined threshold $ \tau $ (and not only the ones associated with the strictly-positive eigenvalues). In our framework, we apply $ \textup{MRSD} $ in place of $ \textup{RSD} $ in~\eqref{eq:opt_theta}.  Fig.~\ref{fig:snr}(\subref{fig:snr_modified}) shows the result of deploying MRSD with $\tau=4$, where the effect of discarding low-SNR components on the right side is evident.

\section{Algorithm}
Algorithm~\ref{alg:decoder} outlines the computation of the optimal compression and quantization parameters given the Seeker's current state covariance $P_t^+$ and its planned optimal path $\pi_t^*$.
The function $\textsc{Weight}$ computes the weights (importance) defined in \eqref{eq:w_def}. 
The $\textsc{Permute}$ function constructs permutation matrices $ C_{B,t} $ and $C_{O,t}$ based on the position of the Supporter $\textup{p}_{B,t}$ and its FOV $d_B$. These matrices extract the cells within and outside the Supporter's local map, respectively, and are used to block-partition the state covariance and weighting matrices (Lines~2-3).  The algorithm proceeds by computing a modified weighting matrix $ \tilde{W}_{BB,t}$, using the optimal $N^{*}_{t}$  and  $\Theta^{*}_t$  computed from Theorem~\ref{theo:one}. The operator $\textup{MRSD}$ is used in Line~7 to avoid the low SNR compressions.


\begin{algorithm}[t]
\caption{Compression Design Algorithm}\label{alg:decoder}
\hspace*{\algorithmicindent} \textbf{Input:}{ $P^+_t$, $\pi_{t}^{*}$, $\textup{p}_{B,t}$, $d_B$, $\tau>0$} 
\begin{algorithmic}[1]

    \State $W_t\leftarrow$\textsc{Weight}$(\pi_{t}^{*})$
    \State $(P^+_{BB,t}, P^+_{OB,t}, P^+_{OO,t}) \leftarrow$ 
    \textsc{Permute}$(P^+_{t}, \textup{p}_{B,t}, d_B)$
    
    \State $(W_{BB,t}, \sim,  W_{OO,t}) \leftarrow$
    \textsc{Permute}$(W_{t}, \textup{p}_{B,t}, d_B)$ 

    \State Compute $Q_t = P_{OB,t}^+ (P_{BB,t}^+)^{-1}$ and $\tilde{W}_{BB,t} = W_{BB,t} + Q_{t}^\top W_{OO,t} Q_{t}$
    
    \State $\displaystyle (\operatorname*{diag}_{1\leq i \leq d_B}(\sigma_{i,t}^2), U_t) \leftarrow$ \textsc{SVD}$(\tilde{W}_{BB,t}^{1/2} P^+_{BB,t} \tilde{W}_{BB,t}^{1/2})$

    \State $\displaystyle \Sigma_t = \operatorname*{diag}_{1 \leq i \leq d_{B}}(\max\{0, \frac{2}{\alpha}- \frac{1}{\sigma_{i,t}^2}\})$

    \State $(\Theta_t^{*}, (N_{t}^{*})^{-1}) \leftarrow$ \textsc{MRSV}$( \tilde{W}_{BB,t}^{1/2} U_t^\top \Sigma_t  U_t \tilde{W}_{BB,t}^{1/2}, \tau)$
    
    \State \Return $N^{*}_{t}$  and  $\Theta^{*}_t$
    
\end{algorithmic}
\end{algorithm}

\begin{figure*} 
\hspace*{1.0em}
\begin{subfigure}[b]{0.32\textwidth} 
    \centering
    \includegraphics[trim = 8cm 3cm 6cm 2cm, clip, width=\textwidth]{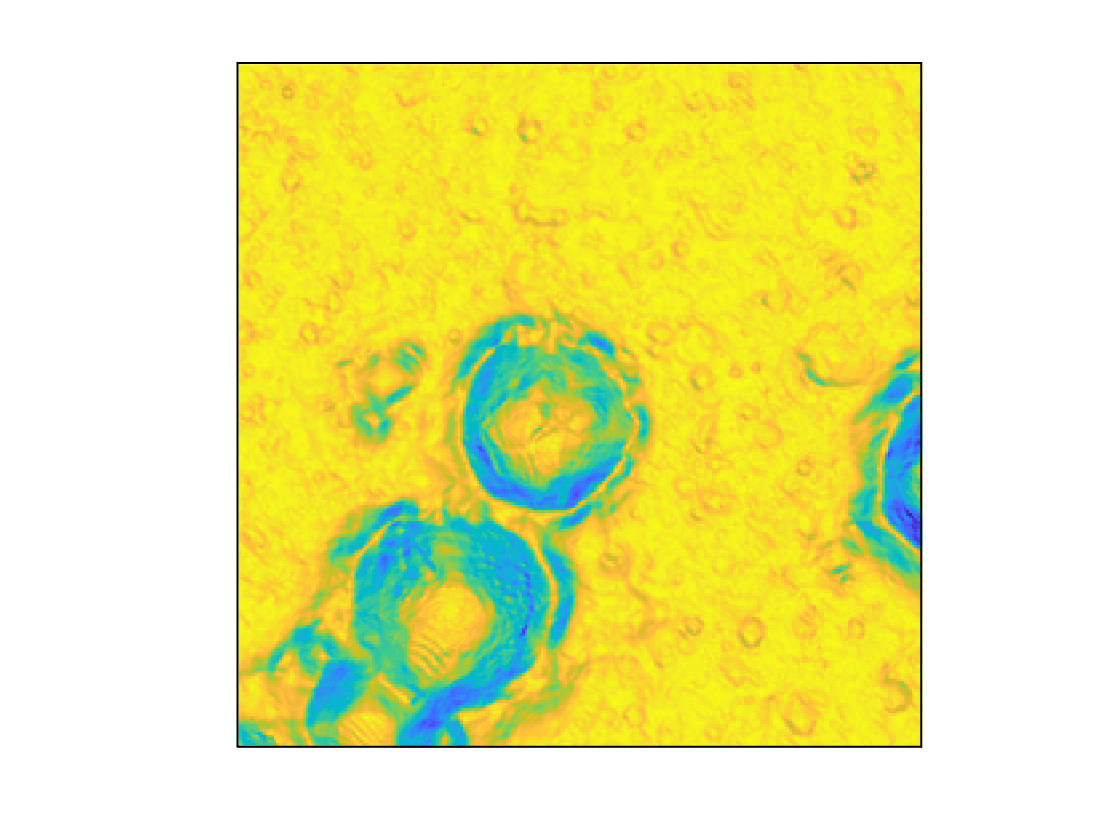}
    \caption{Full resolution traversability map.}
    \label{fig:mars_full}
\end{subfigure}
\begin{subfigure}[b]{0.32\textwidth}
    \centering
\includegraphics[trim = 8cm 3cm 6cm 2cm, clip, width=\textwidth]{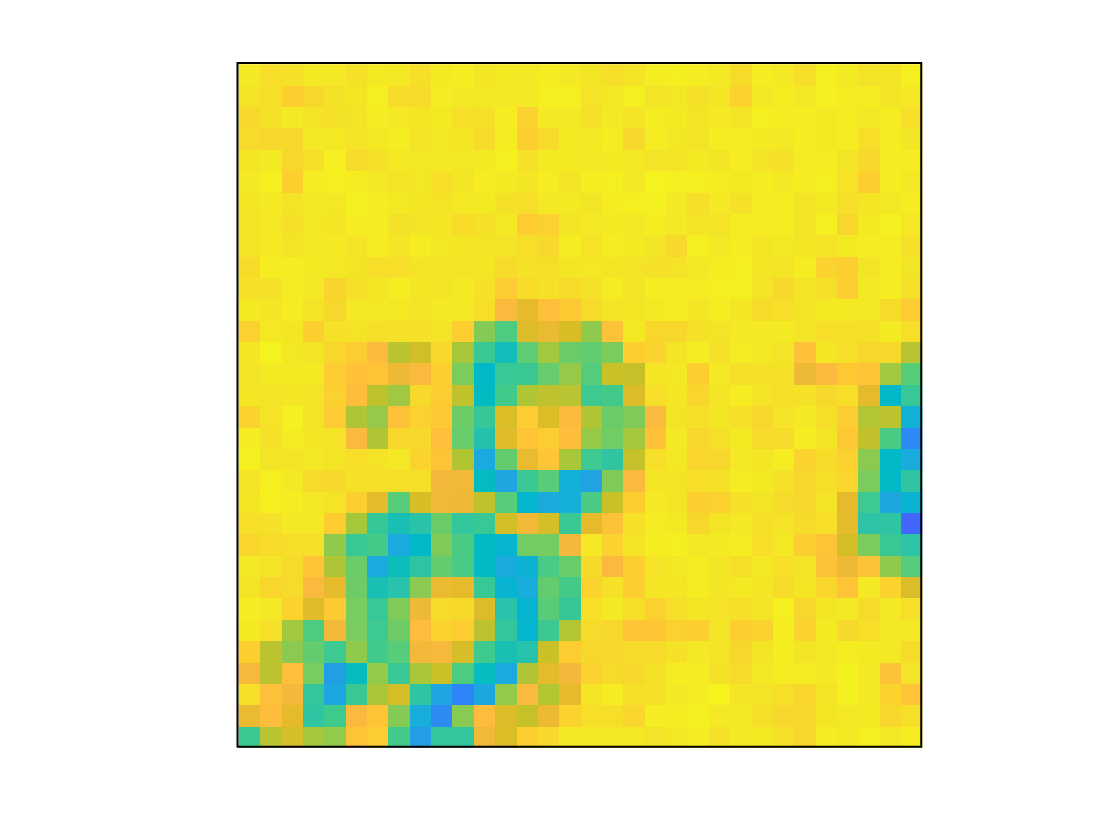}
    \caption{Prior belief $\hat{x}_0$.}
    \label{fig:mars_init}
\end{subfigure}
\begin{subfigure}[b]{0.32\textwidth}
    \centering 
    \includegraphics[trim = 8cm 3cm 6cm 2cm, clip, width=\textwidth]{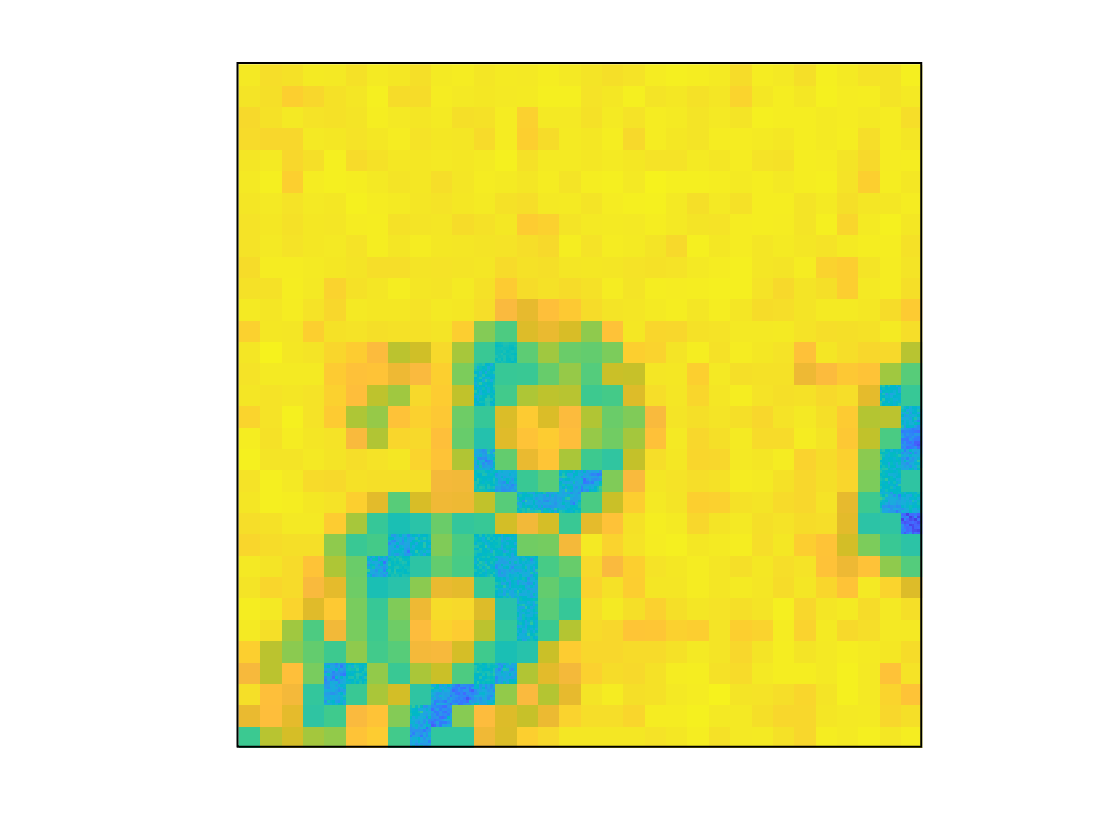}
    \caption{Posterior belief for $\alpha=0.01$.}
    \label{fig:mars_10}
\end{subfigure} 
\\ 
\hspace*{1.0em}
\begin{subfigure}[b]{0.32\textwidth}
    \centering
\includegraphics[trim = 8cm 3cm 6cm 2cm, clip, width=\textwidth]{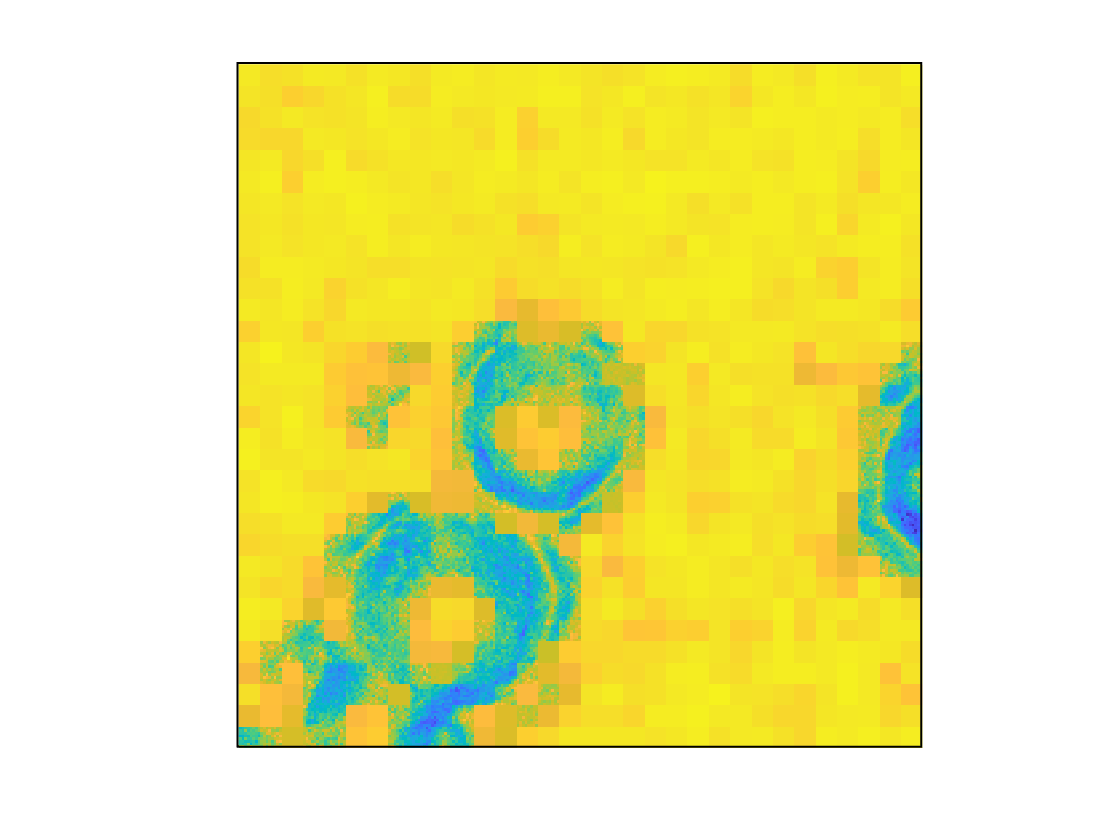}
    \caption{Posterior belief for $\alpha=0.005$.}
    \label{fig:mars_5}
\end{subfigure}
\begin{subfigure}[b]{0.32\textwidth}
    \centering
\includegraphics[trim = 8cm 3cm 6cm 2cm, clip, width=\textwidth]{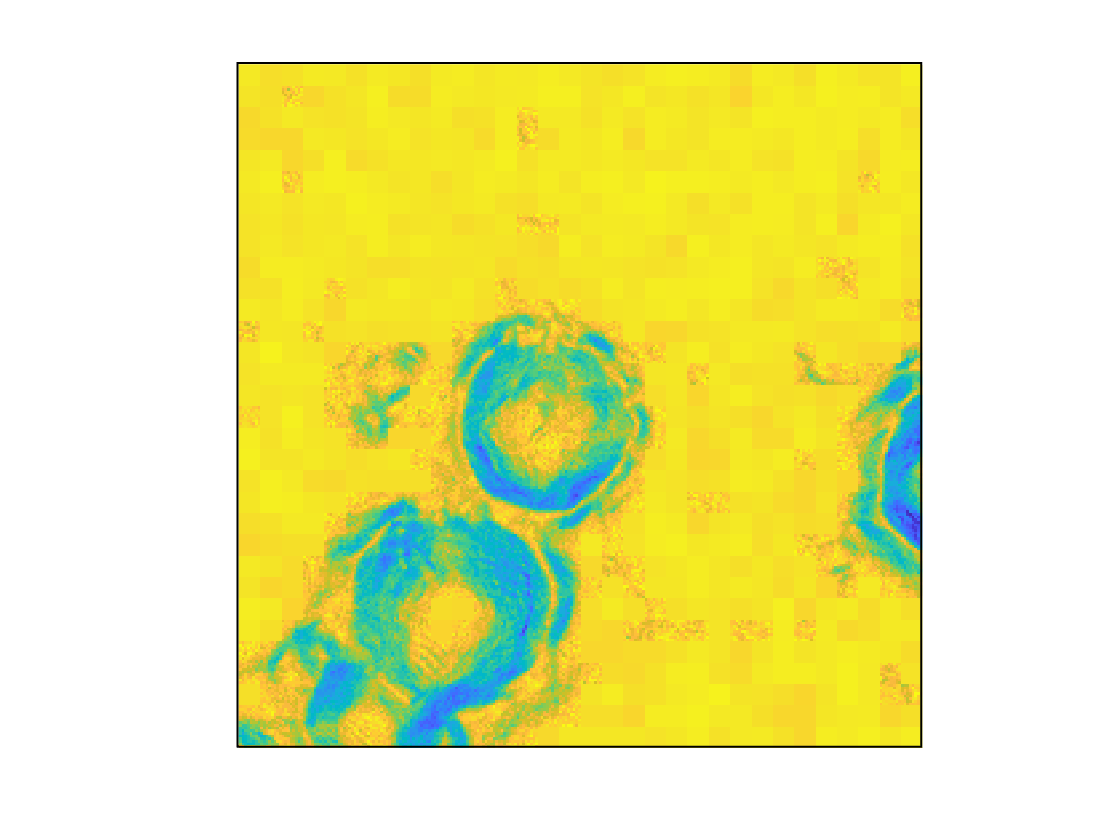}
    \caption{Posterior belief for $\alpha=0.002$.}
    \label{fig:mars_2}
\end{subfigure}
\begin{subfigure}[b]{0.32\textwidth}
    \centering
\includegraphics[trim = 8cm 3cm 6cm 2cm, clip, width=\textwidth]{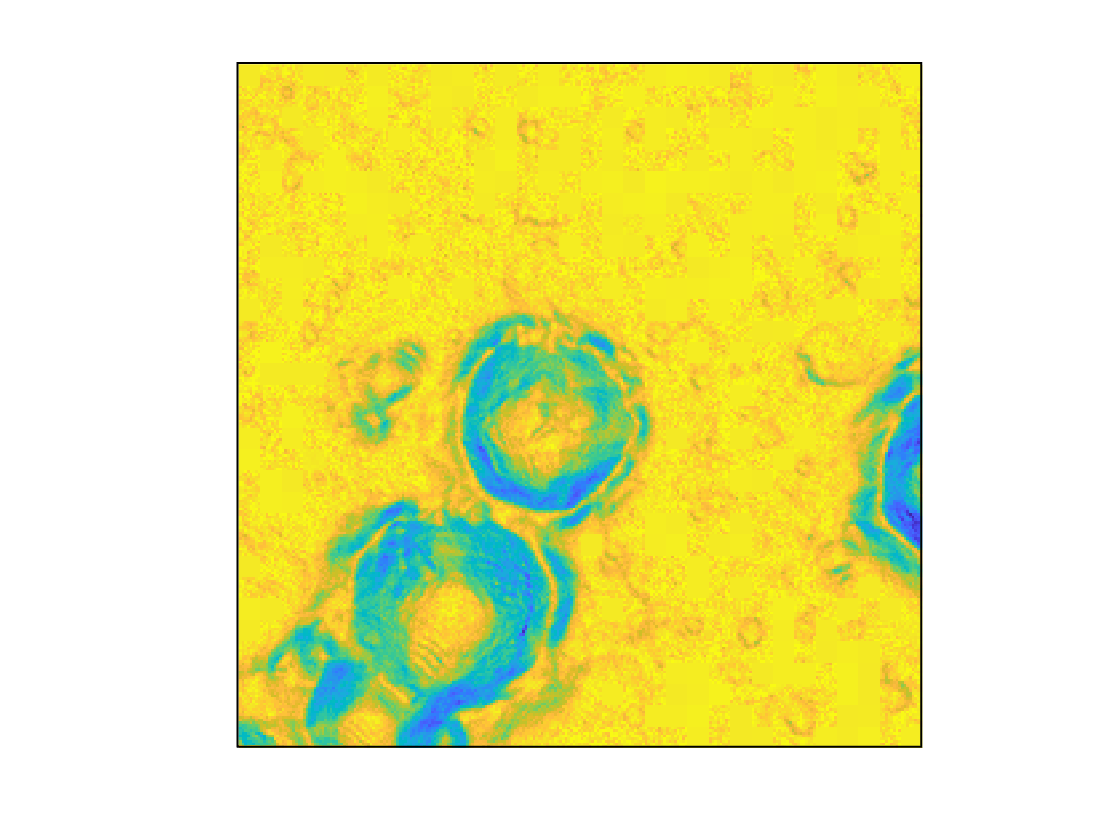}
    \caption{Posterior belief for $\alpha=0.001$.}
    \label{fig:mars_1}
\end{subfigure}
\caption{Effect of bit-rate weight $\alpha$ on optimal compression and the Seeker’s posterior belief.
(\subref{fig:mars_full}) Ground-truth traversability map of the Martian surface, derived from depth data.
(\subref{fig:mars_init}) Initial prior belief $\hat{x}_0$ generated by block-wise averaging.
(\subref{fig:mars_10})–(\subref{fig:mars_1}) Posterior beliefs $\hat{x}_1$ after receiving compressed information from the Supporter for $\alpha = 0.01$, $0.005$, $0.002$, and $0.001$, respectively.}
\label{fig:mars}
\end{figure*}

\begin{figure}[ht]
    \centering        
{\includegraphics[trim = 0cm 0.4cm 0cm 0cm, clip, width=0.9\linewidth]{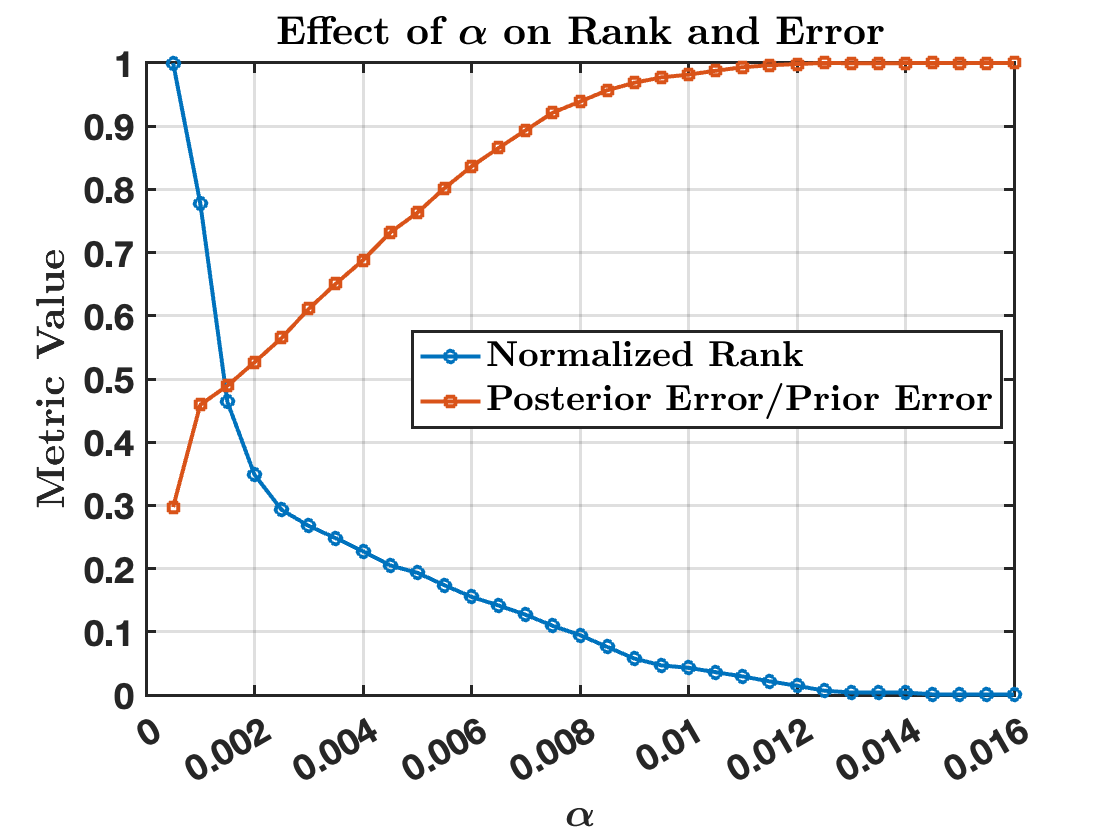} 
    \caption{Effect of bit-rate weight $\alpha$ on compression quality and reconstruction accuracy.
    The blue curve shows the normalized rank of the optimal compression matrix $\Theta^*$. The orange curve represents the ratio between the posterior and prior reconstruction errors, $\|x - \hat{x}_1\| / \|x - \hat{x}_0\|$.}
    \label{fig:alpha_curve}}
\end{figure}

\section{Experiments}
We evaluated our proposed compression framework in two simulation scenarios. In the first simulation, we use a Mars environment depicted in Fig.~\ref{fig:mars}(\subref{fig:mars_full}) and consider a one-shot compression setup. Here, the Supporter has a wide FOV and observes the entire map, and designs a single compressed representation. The objective is to examine how varying the weight on the bit-rate, i.e., the parameter $\alpha$ in Equation~\eqref{eq:alpha_def}, affects the structure and resolution of the optimal compression scheme.  The second scenario takes place in a sample Earth traversability map depicted in Fig.~\ref{fig:map} and features a sequential map compression and communication setup. In this case, the Seeker aims to reach a known destination, while the Supporter follows a predefined path. Both robots continue to observe new parts of the environment as they move. Here, the Supporter dynamically updates and transmits a compressed map over time, selectively retaining and refining task-relevant information.

\subsection{Mars Environment} 
Fig.~\ref{fig:mars}(\subref{fig:mars_full}) shows a $256 \times 256$ elevation map of the Martian surface in the Acidalia Planitia region, obtained from the publicly available HiRISE dataset~\cite{HiRISE2024}, where darker regions indicate areas
more difficult to traverse, and the lighter regions with a yellow
hue indicate areas that are easier to traverse. Let $[y]_j$ denote the raw elevation value at the $j$-th cell in the map. To derive a traversability value, we first compute a local elevation variability measure $[z]_j$, defined as
\[
[z]_j = \textstyle\sum_{j' \in \mathcal{J}} \left| [y]_j - [y]_{j'} \right|,
\]
where $\mathcal{J}$ is the set of neighboring indices around cell $j$. We normalize this value to obtain a proxy for inclination as follows
\[
[x]_j = \frac{[z]_j - \min z}{\max z - \min z}.
\]
The normalized value $[x]_j \in [0,1]$ serves as a proxy for terrain steepness and is used as the traversability value.

Fig.~\ref{fig:mars}(\subref{fig:mars_init}) shows the Seeker's initial belief $\hat{x}_0$, synthetically generated by dividing the traversability map $x$ into $8 \times 8$ blocks and assigning each block the average of its values. The initial uncertainty is modeled by a diagonal covariance matrix $P_0 = 0.001 I$. The weight matrix is defined as $W = \mathrm{diag}([\hat{x}_0]_1 + 0.001, \dots, [\hat{x}_0]_d + 0.001)$, where $d = 256 \times 256$ and the constant $0.001$  is added to ensures numerical stability during the computations in Algorithm~\ref{alg:decoder}, where $\tau=1$ is used in the MRSV function. Intuitively, $W$ assigns higher weights to regions with higher initial occupancy, emphasizing them as more relevant for compression.

We performed simulations for $\alpha \in [0.005, 0.016]$. Figs.~\ref{fig:mars}(\subref{fig:mars_10})–(\subref{fig:mars_1}) show the Seeker's posterior belief after receiving the compressed map for $\alpha = 0.01$, $0.005$, $0.002$, and $0.001$, respectively. The results indicate that for $\alpha \geq 0.016$, the optimal compression is empty, that is, no data is transmitted. In this regime, the communication cost outweighs the potential benefit, and thus, it is better to avoid communication.
As $\alpha$ decreases, transmitting information about the more important regions (the dark areas in our map) becomes increasingly beneficial, while the less critical regions (the yellow areas) are ignored. This behavior is confirmed in Figs.~\ref{fig:mars}(\subref{fig:mars_10})–(\subref{fig:mars_2}), where lowering the value of $\alpha$ monotonically leads to more accurate reconstruction of high-importance areas, while the estimation of less relevant regions remains coarse.
For $\alpha < 0.001$, it becomes cost-effective to include even less important regions in the compression, resulting in a more accurate posterior estimate across the entire map (including yellow region), as shown in Fig.~\ref{fig:mars}(\subref{fig:mars_1}).

We analyze the compression scheme using two metrics shown in Fig.~\ref{fig:alpha_curve}: the normalized rank of the optimal compression matrix $\mathrm{rank}(\Theta^*)/d$ and the $\ell_2$-norm of the error between the actual map $x$ and both the prior and posterior beliefs of the Seeker, defined as $\|x-\hat{x}_1\| / \|x-\hat{x}_0\|$. As expected, the rank of $\Theta^*$ is a monotonically decreasing function of $\alpha$.
Notably, for $\alpha < 0.0005$, $\Theta^*$ becomes full rank. However, due to a non-zero quantization error $N^*$, reconstruction errors persist, which explains why the posterior-to-prior error remains around $0.3$ for $\alpha = 0.0005$. As $\alpha$ decreases further, the quantization becomes increasingly fine, leading to improved reconstruction accuracy and a corresponding reduction in posterior error. While not shown in the figure, in the theoretical limit as $\alpha$ approaches zero, communication becomes effectively free, the quantization noise $N^*$ approaches zero, and the posterior belief converges to the ground truth, yielding zero $\ell_2$ error at the cost of infinite bit-rate.

\begin{figure*}
\hspace*{1.0em}
\begin{subfigure}[b]{0.32\textwidth}
    \centering
    \includegraphics[trim = 9cm 5cm 8cm 4cm, clip, width=\textwidth]{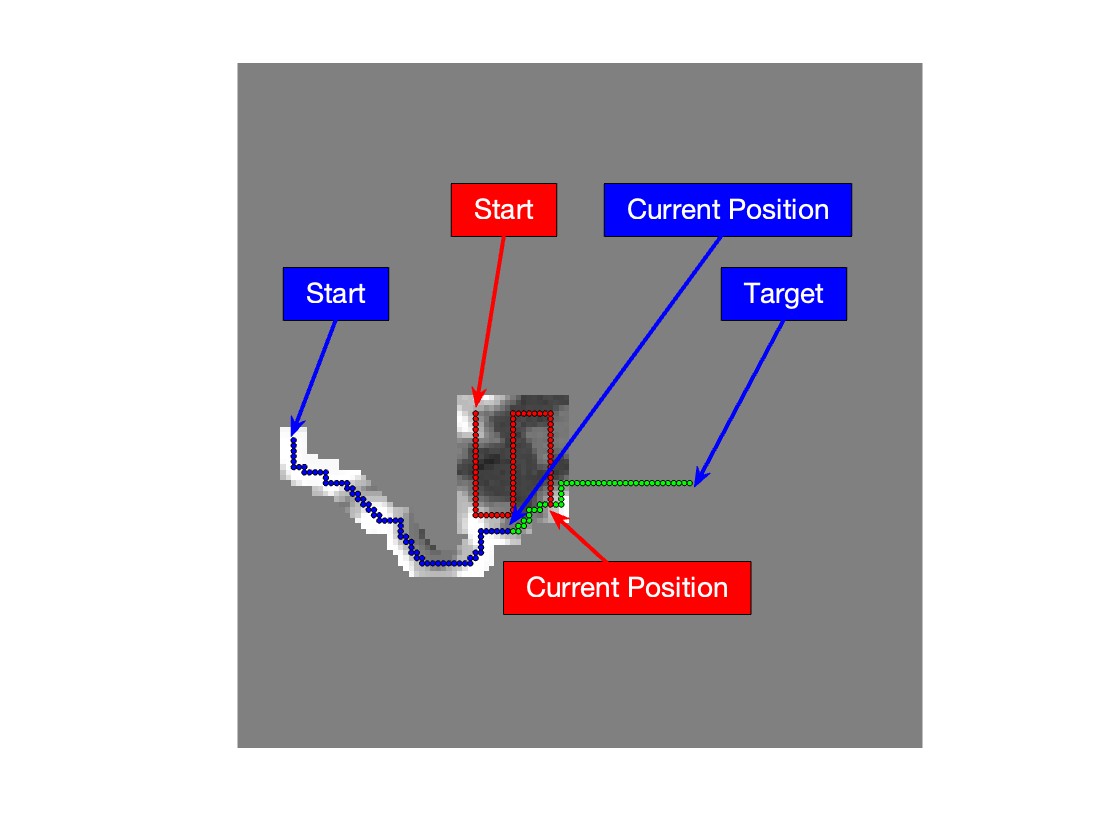}
    \caption{$\alpha=0.0005$ and $t=70$}
    \label{fig:low_alpha_inter}
\end{subfigure}
\begin{subfigure}[b]{0.32\textwidth}
    \centering
\includegraphics[trim = 9cm 5cm 8cm 4cm, clip, width=\textwidth]{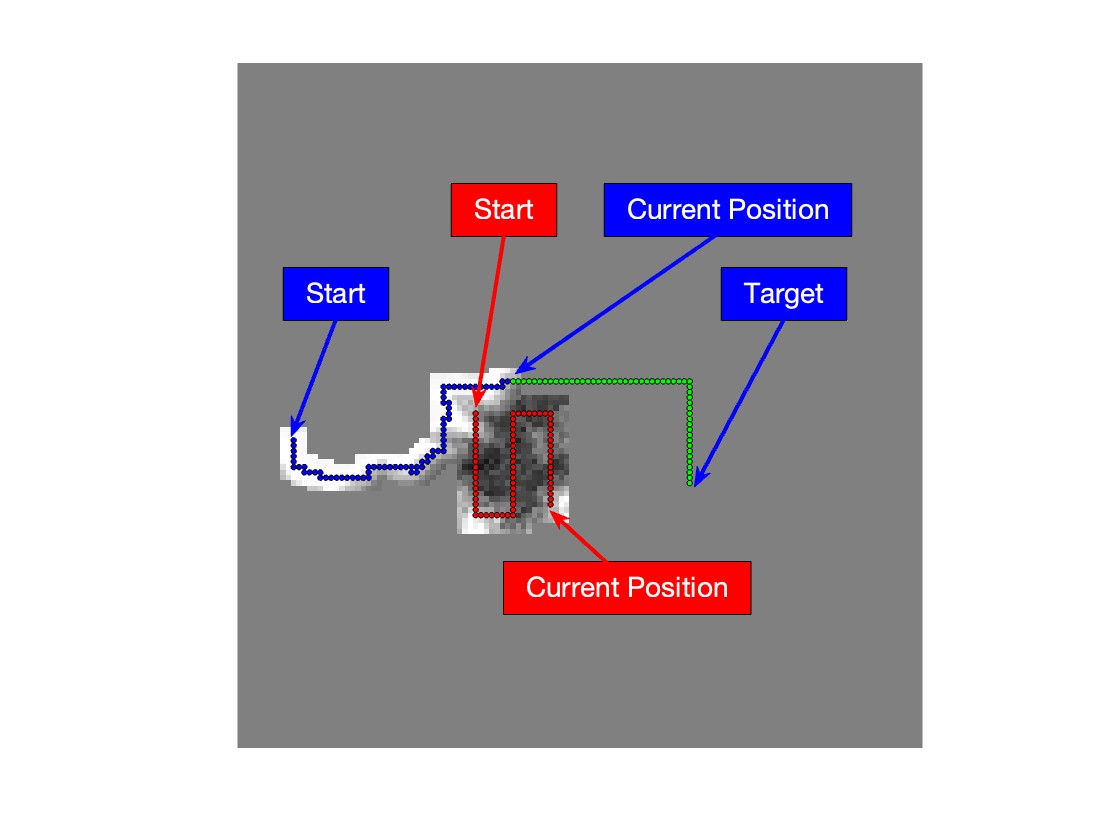}
    \caption{$\alpha=0.05$ and $t=70$.}
    \label{fig:mid_alpha_inter}
\end{subfigure}
\begin{subfigure}[b]{0.32\textwidth}
    \centering \includegraphics[trim = 9cm 5cm 8cm 4cm, clip, width=\textwidth]{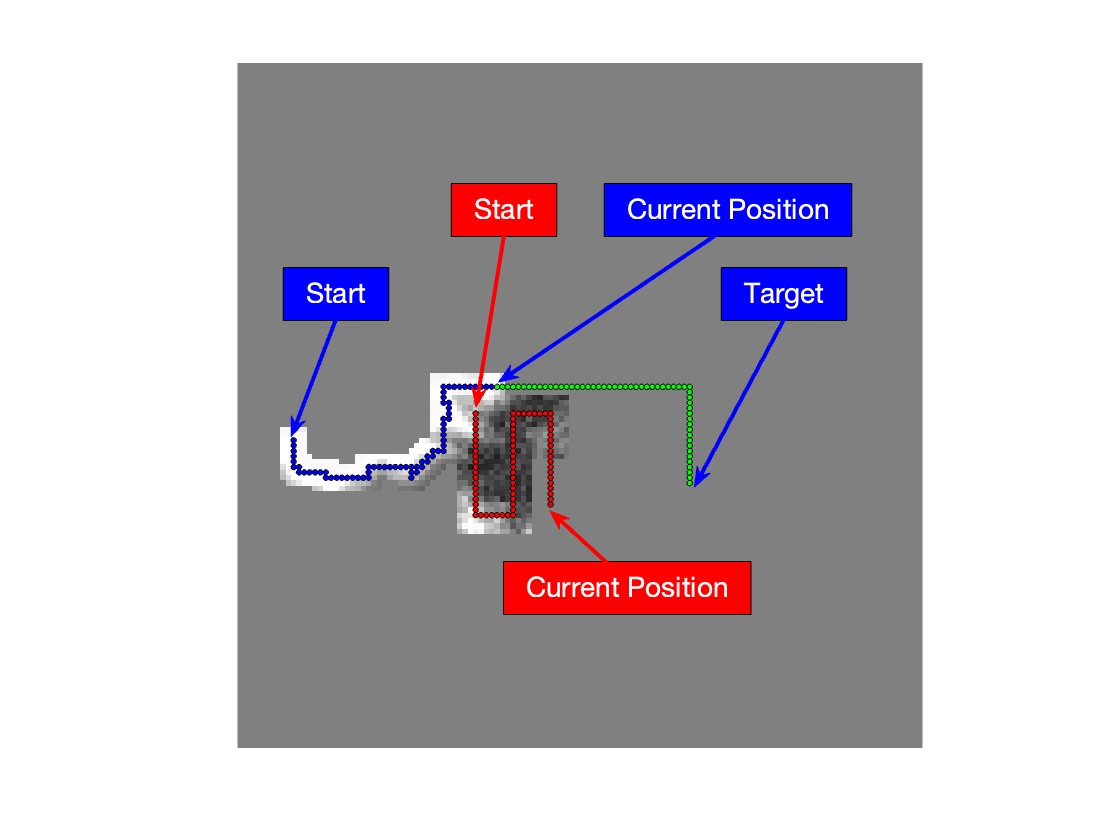}
    \caption{$\alpha=0.9$ and $t=70$.}
    \label{fig:high_alpha_inter}
\end{subfigure} 
\\
\hspace*{1.0em}
\begin{subfigure}[b]{0.32\textwidth}
    \centering
\includegraphics[trim = 9cm 5cm 8cm 4cm, clip, width=\textwidth]{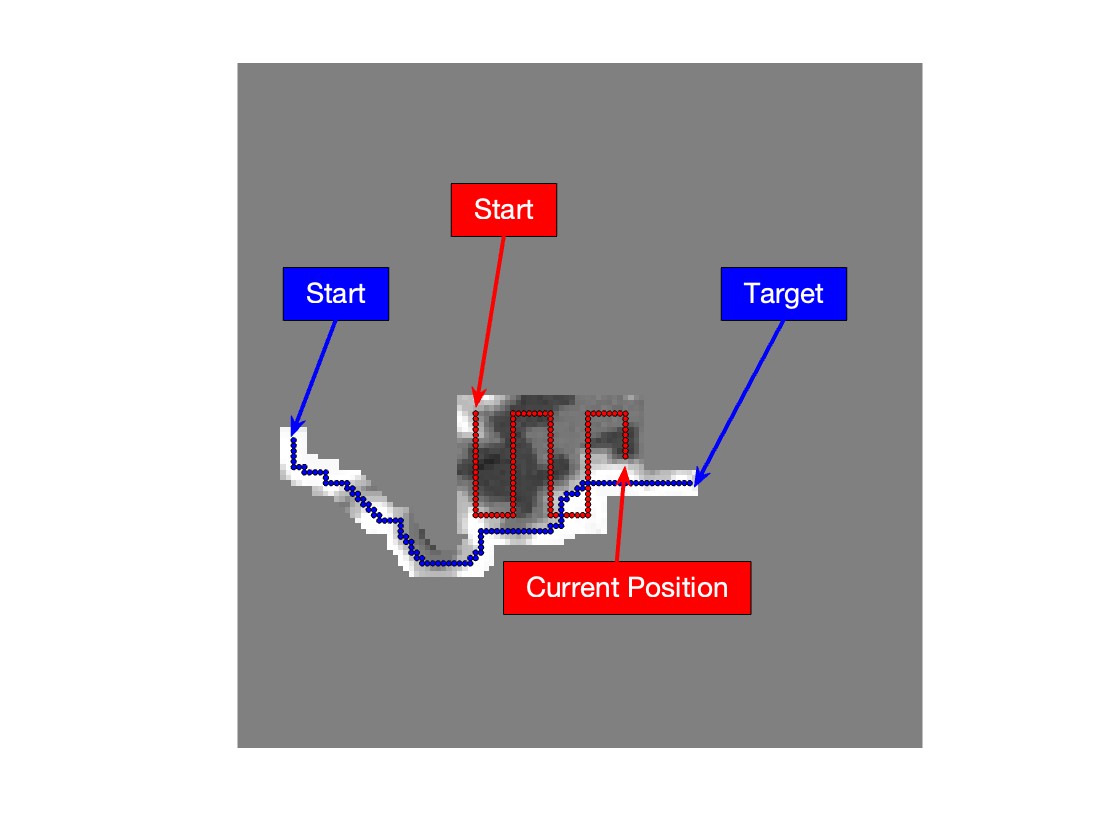}
    \caption{$\alpha=0.0005$ and final time step.}
    \label{fig:low_alpha_final}
\end{subfigure}
\begin{subfigure}[b]{0.32\textwidth}
    \centering
    \includegraphics[trim = 9cm 5cm 8cm 4cm, clip, width=\textwidth]{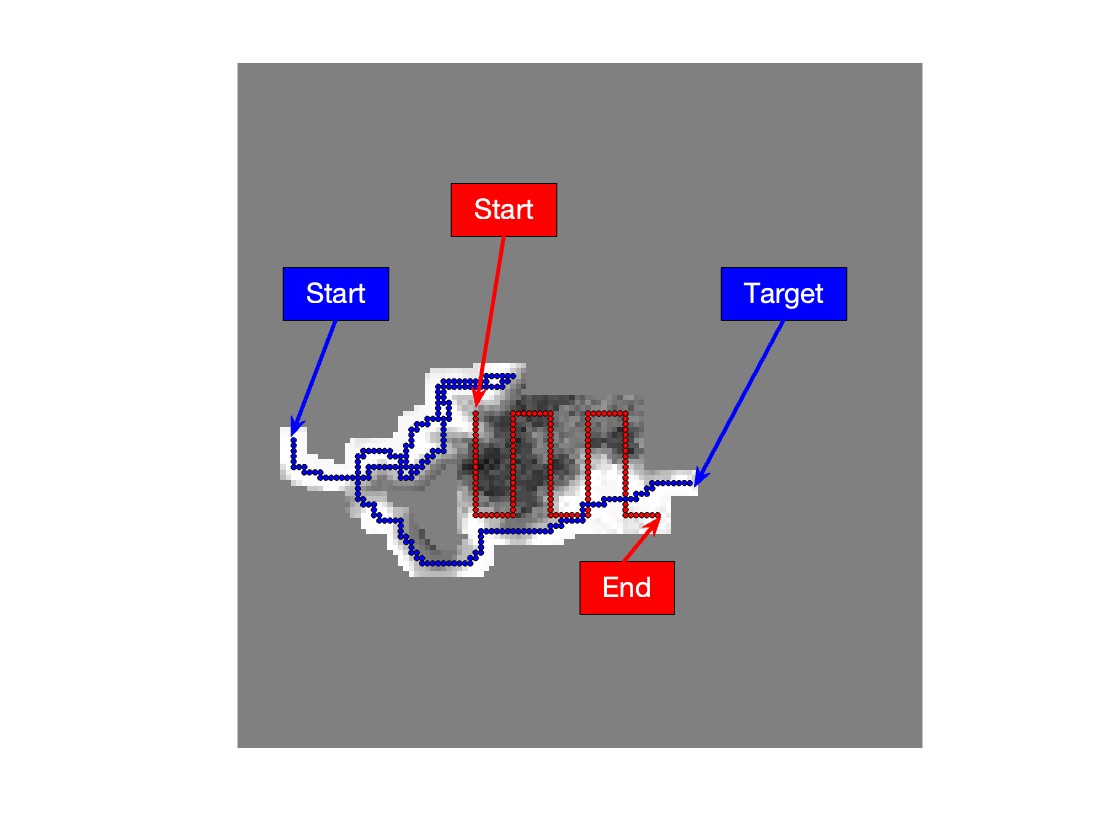}
    \caption{$\alpha=0.05$ and final time step.}
    \label{fig:mid_alpha_final}
\end{subfigure}
\begin{subfigure}[b]{0.32\textwidth}
    \centering
    \includegraphics[trim = 9cm 5cm 8cm 4cm, clip, width=\textwidth]{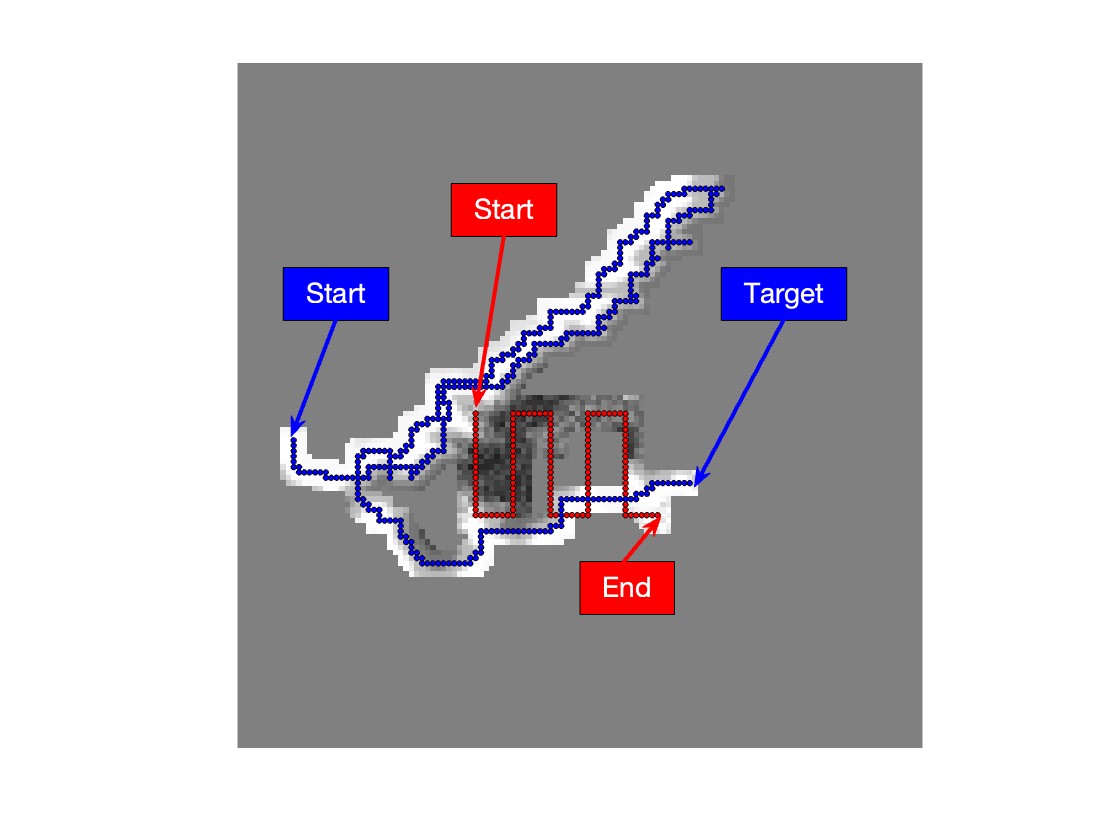}
    \caption{$\alpha=0.9$ and final time step.}
    \label{fig:high_alpha_final}
\end{subfigure}
\caption{Figures (a)-(f) illustrate the Seeker's estimated map for different values of $\alpha$ at the intermediate time step $t=70$, and final time step, when the goal is reached. The Supporter's path is illustrated in red. The cells that the Seeker has already traversed are presented in blue, while green is the current path constructed by its path-planning algorithm.}
\label{fig:sanpshots}
\end{figure*}

\begin{figure}[ht]
    \centering        
{\includegraphics[trim = 0cm 0.0cm 0cm 0cm, clip, width=0.9\linewidth]{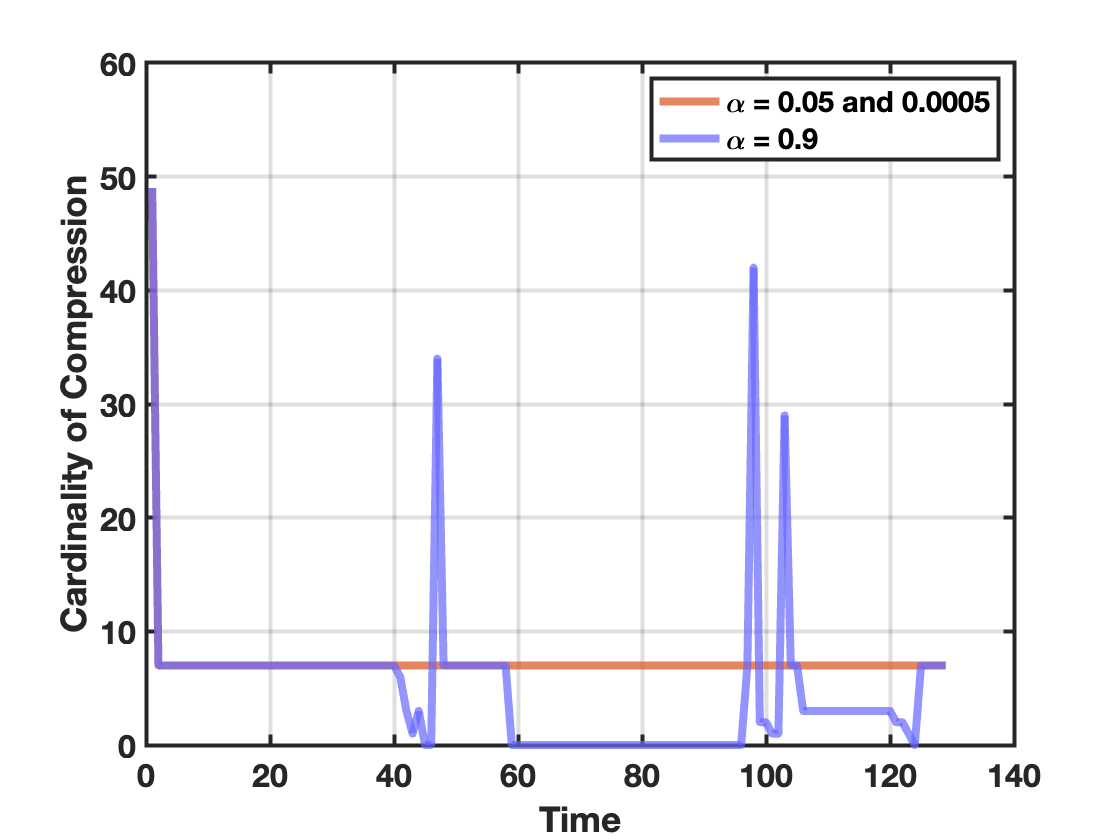} 
    \caption{Cardinality of optimum compression for different values of $\alpha =0.0005, 0.005$ and $0.9$.}
    \label{fig:abst_profile}}
\end{figure}
\begin{figure}[ht]
    \centering        
{\includegraphics[trim = 0cm 0.0cm 0cm 0cm, clip, width=0.9\linewidth]{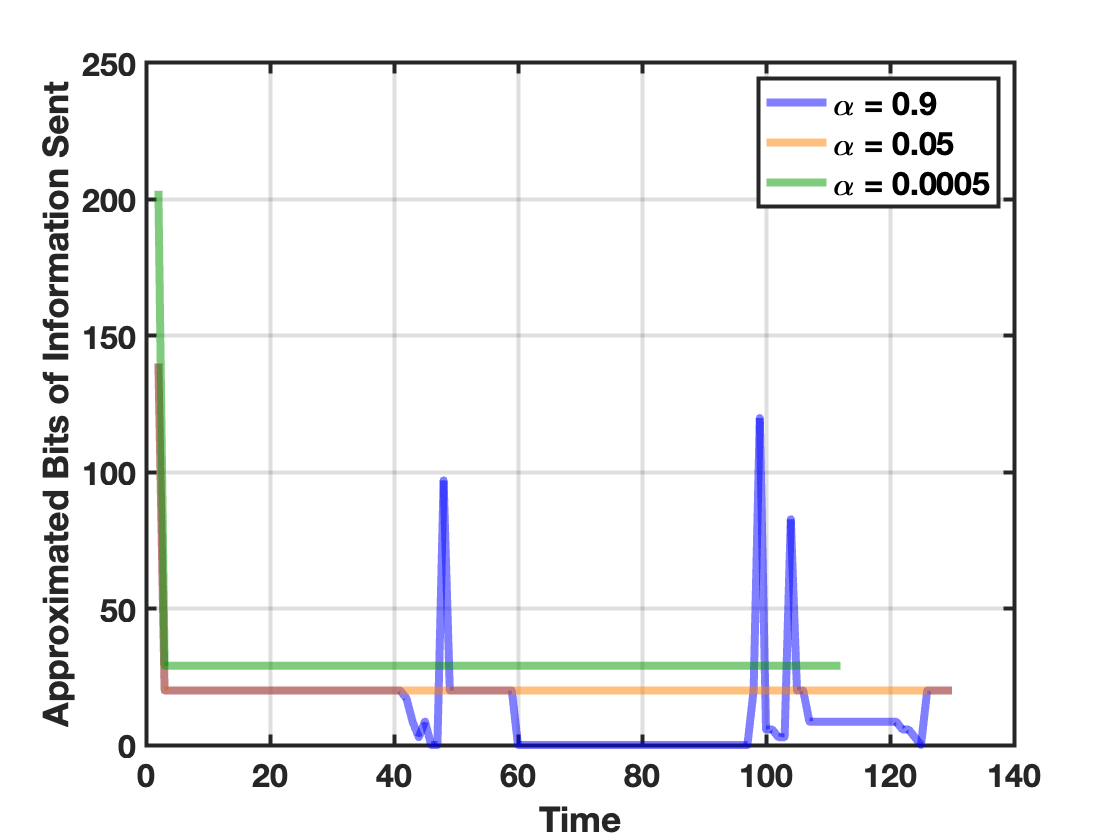} 
    \caption{Approximated bits of information the Supporter sent to the Seeker at each time step for different values of $\alpha =0.0005, 0.005$ and $0.9$.}
    \label{fig:bits}}
\end{figure}

\subsection{Earth Environment}
For this setting, we conducted simulations on a 2D Earth map depicted in Fig.~\ref{fig:map}, represented as an $128 \times 128$ grid world.
The Supporter can move over obstacles (i.e., it is an aerial vehicle, such as a surveillance drone). 
The Seeker's FOV is $5\times 5$ cells, while the Supporter has a FOV of 7 × 7 cells. Both robots are positioned at the center of their respective local maps. We tested our framework for different values of $\alpha$, while maintaining the same initial and destination positions of the Seeker indicated by the blue ``Start" and ``Target" labels in sub-figures of Fig.~\ref{fig:sanpshots}. The red path represents the Seeker’s trajectory, beginning at the red ``Start" position and ending at the red ``End" point.
The Seeker's path-planner runs $A^*$ algorithm and uses the cell cost given in \eqref{eq:cost_def} with values of the constants $a = 0.025$ and $\epsilon= 0.501$. The cells that the Seeker has already traversed are presented in blue, while green is the current path constructed by its path-planning algorithm. The Seeker starts from the initial belief $\bm{x}_0\sim \mathcal{N}(0.5 \bm{1}_d, I_d)$, and the weights $w_t$ are defined with the parameter $\sigma = 10$ in \eqref{eq:w_def}.

We performed simulations for three values, namely, $\alpha=0.9$, $\alpha=0.05$, and $\alpha=0.0005$, by running Algorithm~\ref{alg:decoder} within the Compression Scheme Module (see Fig.~\ref{fig:framework}), with $\tau=1.4$. We also simulated the fully-informed (FI) strategy, where the Supporter perfectly sends occupancy values of cells in its FOV(corresponding to $\Theta_t=I_{d_B}$ and $N_t=0$ for all $t$), and the uninformed (U) strategy, where the Supporter sends no information at all (corresponding to $N_t=\infty$ for all $t$).
Fig.~\ref{fig:sanpshots} illustrates the Seeker's estimated map in gray scale for these values of $\alpha$ at the intermediate time step $t=70$, and the final time step, that is, when the goal is reached. As seen in Figs.~\ref{fig:sanpshots}(\subref{fig:low_alpha_final})-(\subref{fig:high_alpha_final}), higher values of $\alpha$ lead to more aggressive compression and quantization, resulting in lower-quality map estimates on the Seeker’s side and consequently longer paths to the goal.

The cardinality of the selected compression, that is, $\mathrm{rank}(\Theta_t^*)$ at each time step is shown in Fig.~\ref{fig:abst_profile}. This value for $\alpha=0.05$ and $\alpha= 0.0005$ is $49$ at the first time step and $7$ afterwards. 
This result suggests that, for small values of $\alpha$, the optimal policy for the Supporter is to initially transmit a full-resolution (albeit noisy) representation of its local map. In subsequent steps, it sends only the information corresponding to the seven newly observed cells. For  $\alpha=0.9$, the cardinality of optimum compression varies drastically over time. For example, between time steps $60$ and $95$, it is strategic to send no information, as the Supporter's local map is not informative. Also, at certain instances it is beneficial to send maps with cardinality of more than $7$ (the number of newly observed cells).   
Fig.~\ref{fig:bits} 
shows the approximated bits of information the Supporter sent to the Seeker over time, computed as 
\[
b_t \triangleq I(\bm{o}^G_{B,t} ; \bm{y}^G_{B,t}), 
\]
as computed in Theorem~\ref{theo:one}.
This figure shows that although the cardinality of the selected compression for $\alpha=0.05$ and $\alpha= 0.0005$ are equal, the selected quantization levels differ. The quantization for $\alpha= 0.0005$ is more precise and results in a shorter path length for the Seeker, as seen in Fig.~\ref{fig:sanpshots}(\subref{fig:low_alpha_final}) compared to Fig.~\ref{fig:sanpshots}(\subref{fig:mid_alpha_final}). Conversely, this finer quantization  requires a higher bit rate to be transferred.

\begin{table}[t]
\caption{Simulation results.}
\label{tab:results}
\centering
\small
\begin{tabular}{ 
  |>{\centering\arraybackslash}p{0.6cm}
  |>{\centering\arraybackslash}p{0.6cm}
  |>{\centering\arraybackslash}p{1.6cm}
  |>{\centering\arraybackslash}p{1.3cm}
  |>{\centering\arraybackslash}p{1.1cm}
  |>{\centering\arraybackslash}p{0.6cm}| } 
    \hline
    & {FI} & {$\alpha=0.0005$} & {$\alpha=0.05$} &  {$\alpha= 0.9$} &  {U}  \\
    \hline
    \(r_\textrm{avg}\) & 49 &  7.33 & 7.33  &  4.87 & 0  \\ 
    \hline
    \(t_\textrm{reach}\) & 112 & 112 & 224  & 390 & 424  \\ 
    \hline
    \(c_\textrm{reach}\) & 14.81 & 14.81 & 33.79  & 72.56 & 95.47  \\ 
    \hline
    \(b_\textrm{avg}\) & $\infty$ & 61.17 & 41.80  & 27.83 & 0  \\ 
    \hline
\end{tabular}
\end{table}

Table~\ref{tab:results} compares the average cardinality of the optimal compression $r_\textrm{avg}$ and the average bit rate $b_\textrm{avg}$, both computed over the time horizon required to reach the destination, as well as the time to reach the goal cell  $t_\textrm{reach}$ and the total cost of the traversed path $c_\textrm{reach}$ (defined as the sum of traversability values along the path). As expected, both the time to reach the goal and the path cost increase with higher values of $\alpha$. In contrast, the compression cardinality and the communication rate decrease.

\section{Conclusions}
This work presents a principled framework for communication-aware map compression in collaborative navigation, enabling a Supporter agent, such as an aerial vehicle, to transmit compressed, task-relevant information to a Seeker agent, such as a ground vehicle. The map is represented as a set of traversability values assigned to individual cells in the environment. The proposed framework constructs a linear compression by selecting weighted groupings of these cells and by assigning each group a resolution level, which directly determines its bit-rate. This flexible design allows the robot to allocate higher bit-rates to critical regions, and assigns lower, or even zero, bit-rate to uninformative areas.

A key contribution of this work is the derivation of a closed-form solution to the compression design problem, utilizing a reverse water-filling approach from rate-distortion theory, which enables efficient, real-time computation. The proposed framework enables the Seeker to infer the Supporter’s compression decisions from shared trajectory data, removing the need to transmit the compression structure, thereby reducing communication overhead.
Future work will explore extending this framework to richer map representations such as mixed geometric-semantic maps, topological abstractions, or scene graphs, which can encode object-level and relational information. 

\appendices

\section{Proof of Proposition~\ref{prop}}
\label{appendix_zero}
For $I(\bm{o}_{B,t} ; \bm{y}_{B,t})$, it follows that
\begin{align}
\nonumber
    &I(\bm{o}_{B,t} ; \bm{y}_{B,t}) 
   = h(\bm{y}_{B,t}) - h(\bm{y}_{B,t} | \bm{o}_{B,t}) \\
   \nonumber
   &= h(\bm{y}_{B,t}) - h(\bm{o}_{B,t} + \bm{n}_{t} | \bm{o}_{B,t}) = h(\bm{y}_{B,t})-  h(\bm{n}_{t}),
\end{align}
where the last equality follows from the fact that $\bm{n}_{t}$ and $\bm{o}_{B,t}$ are independent. A similar result can be established for $I(\bm{o}^G_{B,t} ; \bm{y}^G_{B,t})$. Thus, it follows that
\begin{align}
\nonumber
\hspace{-0.6em}
    I(\bm{o}_{B,t} ; \bm{y}_{B,t}) 
   &\!-\! I(\bm{o}^G_{B,t} ; \bm{y}^G_{B,t})\\ \label{eq:I_diff}
   &\!=\!(h(\bm{n}^G_t)\!-\! h(\bm{n}_t)) \!-\! (h(\bm{y}^G_t)\!-\! h(\bm{y}_t)).
\end{align}
On the other hand, for a Gaussian random variable  $\bm{x}^G$ and any other random variable $\bm{x}$ with
the same mean and covariance matrix, we have that 
$\mathrm{KL}(\bm{x}_{t} \| \bm{x}^G_{t})= h(\bm{x}^G_t)- h(\bm{x}_t)$ \cite[Fact~C.4]{silva2009unified}. Using this relation to simplify \eqref{eq:I_diff} establishes \eqref{eq:KL_diff}. Furthermore,
\begin{align*}
    &\mathrm{KL}(\bm{n}_{t} \| \bm{n}^G_{t}) \!=\! h(\bm{n}^G_t)\!-\! h(\bm{n}_t) \!=\! \textstyle\sum_{i=1}^ {d_{\Theta_t}} h([\bm{n}^G_t]_i)\!-\! h([\bm{n}_t]_i)\\
    &=\textstyle\sum_{i=1}^ {d_{\Theta_t}} 
     \frac{1}{2} \log(\frac{2\pi e \ \Delta_{i,t}^2}{12}) -  \log(\Delta_{i,t})\!\!=\! \frac{d_{\Theta_t}}{2} \log \frac{2\pi e}{12},
\end{align*}
where we used the fact that elements of $\bm{n}_t$ (similarly, $\bm{n}^G_t$) are mutually independent.

\section{Proof of Theorem~\ref{theo:one}}
\label{appendix_one}
    \subsection*{Proof of claim~i)}
    Let us define the joint random variable $\bm{z}_t \triangleq [\hat{\bm{x}}_{B,t}^{+\top},
    \hat{\bm{x}}_{O,t}^{+\top}, \bm{y}_{B,t}^\top]^\top\!\!$. It is easy to verify that $\bm{z}_t$ has mean  $\mathbb{E}[\bm{z}_t] = [\hat{x}_{B,t}^{+\top}, \hat{x}_{O,t}^{+\top}, (\Theta_{t} \hat{x}^+_{B,t})^\top]^\top$ and covariance 
    \begin{align*}
        &\mathbb{V}[\bm{z}_t] =\begin{bmatrix}
        P_{BB,t}^+ & P_{BO,t}^+ & P_{BB,t}^+ \Theta_{t}^\top\\
        P_{OB,t}^+ & P_{OO,t}^+ & P_{OB,t}^+ \Theta_{t}^\top\\
        \Theta_{t} P_{BB,t}^+ & \Theta_{t}  P_{BO,t}^+  & \Theta_{t} P_{BB,t}^+ \Theta_{t}^\top + N_{t}
        \end{bmatrix}
        .
    \end{align*}
The LMVE is mathematically equivalent to the marginalization of the jointly Gaussian distribution. Therefore, one may compute the conditional covariance $P_{BB,t+1} = \mathbb{V}[\hat{\bm{x}}^+_{B,t}| \bm{y}_{B,t}]$ as
\begin{align}
    \nonumber
        P_{BB,t+1}&\!=\!P_{BB,t}^{+}\!\!-\!\!P_{BB,t}^{+}
        \Theta_{t}^\top\!(\Theta_{t} P_{BB,t}^+ \Theta_{t}^ \top\!\!+ \!N_{t})^{-1} 
        \Theta_{t} P_{BB,t}^+\\ \label{eq:bb_iter}
        &\!=\!((P_{BB,t}^{+})^{-1}\!+ \Theta_t^\top N_{t} ^{-1}\Theta_t)^{-1},
\end{align}
where in the last inequality we have used the Woodbury matrix inversion Lemma \cite{woodbury1950inverting}.
Similarly, we compute $P_{OO,t+1} = \mathbb{V}[\hat{\bm{x}}^+_{O,t}| \bm{y}_{B,t}]$ as
\begin{align} \nonumber
        &P_{OO,t+1} = \\ \label{eq:dd_iter}
        & P_{OO,t}^{+}\!\!-\!\!P_{OB,t}^{+}
        \Theta_{t}^\top\!
        (\Theta_{t} P_{BB,t} \Theta_{t}^ \top\!\!+\!N_{t}\!)^{-1}  \Theta_{t} P_{BO,t}^+. 
\end{align}
From \eqref{eq:bb_iter}, we have $\Theta_{t}^\top\!
        (\Theta_{t} P_{BB,t} \Theta_{t}^ \top\!\!+\! N_{t})^{-1}  \Theta_{t} = (P_{BB,t}^+)^{-1} (P_{BB,t}^+- P_{BB,t+1}
        )(P_{BB,t}^+)^{-1}$. 
        Substituting this expression in \eqref{eq:dd_iter} yields
\begin{align*}
    P_{OO,t+1} = S_{t}+ Q_{t} P_{BB,t+1} Q_{t}^\top,
\end{align*}
where $Q_{t}\!\triangleq\! P_{OB,t}^+ (P_{BB,t}^+)^{-1}\!$, and $S_{t}\! \triangleq \!P_{OO,t}^{+}\!-Q_{t} P_{BB,t}^{+} Q_{t}$.

\subsection*{Proof of claim~ii)}
The first equality in \eqref{eq:trace} is established as follows:
\begin{align*}
    &\mathrm{tr}(W_{R,t} P_{R,t+1})= \mathrm{tr}(C_{R,t} W_t C_{R,t}^\top C_{R,t} P_{t+1}C_{R,t}^\top)\\
    & =\mathrm{tr}(C_{R,t} W_t P_{t+1}C_{R,t}^\top) = \mathrm{tr}(W_t P_{t+1}C_{R,t}^\top C_{R,t})\\
    &=\mathrm{tr}(W_t P_{t+1}),
\end{align*}
where we have used the fact that $C_{R,t}$ is a unitary matrix and that the trace of a product of matrices is invariant under cyclic permutations. Using the fact that $W_{R,t}$ is diagonal, we have
\begin{align} 
    \label{eq:trace_1}
    &W_{R,t} P_{R,t+1}= \begin{bmatrix}
        W_{BB} P_{BB} & W_{BB} P_{BO}\\
        W_{OO} P_{OB}   &  W_{OO} P_{OO}
    \end{bmatrix}.
\end{align}
Applying the trace operator on both sides of \eqref{eq:trace_1} yields
\begin{align*}
    &\mathrm{tr}(W_{R,t} P_{R,t+1}) =\\
    &\mathrm{tr}(W_{BB,t} P_{BB,t+1}) + \mathrm{tr}(W_{OO,t} P_{OO,t+1})=\\ 
    &\mathrm{tr}(W_{BB,t} P_{BB,t+1}) + \mathrm{tr}(W_{OO,t} S_{t}) 
    \\ & \qquad \qquad \qquad \qquad \  +\mathrm{tr}(W_{OO,t} Q_{t} P_{BB,t+1} Q_{t}^\top),
\end{align*}
where we plugged the value $P_{OO,t}$ from \eqref{eq:p_dd}.
Finally, defining $\tilde{W}_{BB,t} \triangleq W_{BB,t} + Q_{t}^\top W_{OO,t} Q_{t}$ completes the proof.

\subsection*{Proof of claim iii)}
Substituting $P_{BB,t}=C_{B,t} P_t^+ C_{B,t}^\top$ in \eqref{eq:mutal_first}, we have
$ I(\bm{o}^G_{B,t};\bm{y}^G_{B,t})\!=\!
    \frac{1}{2} \log\!\det(\Theta_t P_{BB,t}^+ \Theta_t^\top+ N_{t})-\frac{1}{2} \log\!\det(N_{t})$. 
    Using the matrix determinant Lemma \cite{harville1998matrix}, we have that $\det(\Theta_t P_{BB,t}^+ \Theta_t^\top+N_{t})\!=\! \det(P_{BB,t}^+) \det(N_{t})\\ \det((P_{BB,t}^+)^{-1}+ \Theta_t^\top N_t^{-1} \Theta_t)$. Thus,
    \begin{align*}
I(\bm{o}^G_{B,t};\bm{y}^G_{B,t})\!=& \frac{1}{2} \log\!\det(P_{BB,t}^+)\\ 
&+\frac{1}{2}
\log\!\det((P_{BB,t}^+)^{-1}+ \Theta_t^\top N_{t}^{-1} \Theta_t) \\
=& \frac{1}{2}\log\!\det(P_{BB,t}^+)- \frac{1}{2}\log\!\det(P_{BB,t+1}),
\end{align*}
where we used \eqref{eq:p_bb} in the last equality.


\section{Proof of Theorem~\ref{theo:two}}
\label{appendix_two}
Let $\tilde{W}_{BB,t}^{1/2} P^+_{BB,t} \tilde{W}_{BB,t}^{1/2}$ have the eigen-decomposition as $U_t^\top D U_t$, where $D_t= \mathrm{diag}_{1\leq i \leq d_i}( \sigma_{i,t}^2)$ and $U_t$ is a unitary matrix $U_tU_t^\top=U_t^\top U_t=I$. 
By defining the new variable $Q\triangleq U_t \tilde{W}_{BB,t}^{1/2} P_{BB,t+1} \tilde{W}_{BB,t}^{1/2} U_t^\top$, we may verify that $\mathrm{tr}(Q) = \mathrm{tr}(\tilde{W}_{BB,t} P_{BB,t+1})$ and also $\log\!\det(Q) =\log\!\det(P_{BB,t+1}) +\log\!\det(\tilde{W}_{BB,t})$. For $\tilde{W}_{BB,t}^{1/2} \succ 0$, we have that $P_{BB,t+1} \preceq P^+_{BB,t}$ if and only if
\begin{align*}
\tilde{W}_{BB,t}^{1/2} P_{BB,t+1} \tilde{W}_{BB,t}^{1/2} \preceq \tilde{W}_{BB,t}^{1/2} P^+_{BB,t} \tilde{W}_{BB,t}^{1/2},
\end{align*}
which, after conjugation by the unitary matrix $U_t$, is equivalent to $Q \preceq D$. 

By ignoring the constant term $ \log\!\det(\tilde{W}_{BB,t})$, Problem \eqref{eq:opt_convex} can be cast as the following problem in terms of $Q$.
\begin{subequations}
\label{eq:hadamard}
    \begin{align}
    \min_{Q} \quad&  \mathrm{tr}(Q)- \frac{\alpha}{2} \log\!\det(Q),\\
    \text{s.t.,} \quad 
    & Q \preceq D = \mathrm{diag}_{1\leq i \leq d_B}(\sigma_{i,t}^2) .
\end{align}
\end{subequations}
For any positive definite matrix $Q$, Hadamard's inequality states that $\det (Q) \leq \Pi_i Q_{ii}$  and the equality holds if and only if the matrix is diagonal. 
Hence, if the diagonal elements of $Q$ are fixed, $\det(Q)$ is maximized by setting all off-diagonal entries to zero. 
Thus, the optimal solution to \eqref{eq:hadamard} is diagonal.

Writing $Q = \mathrm{diag}_{1\leq i \leq d_B}(q_i)$, the problem is decomposed as $d_B$ independent optimization problems, each of which minimizes $q_i-(\alpha/2) \log q_i$ subject to $0\leq q_i\leq \sigma_{i,t}^2$. 
Direct calculation yields that the optimal solution is $q^*_i=\min(\alpha/2, \sigma_{i,t}^2)$, and thus \[Q^*=\operatorname*{diag}_{1\leq i \leq d_B} (\min\{ \frac{\alpha}{2}, \sigma_{i,t}^2\}).
\]
To compute $\Theta_t^*$, we first need to compute $(P_{BB,t+1}^*)^{-1}$. From the definition of $Q$, we have 
\begin{align*}
(Q^*)^{-1} &= \operatorname*{diag}_{1\leq i \leq d_B} (\max \{ \frac{2}{\alpha}, \frac{1}{\sigma_{i,t}^2}\}) \\
&= U_t \tilde{W}_{BB,t}^{-1/2} (P_{BB,t+1})^{-1} \tilde{W}_{BB,t}^{-1/2} U_t^\top.
\end{align*}
Therefore, 
\begin{align*}
(P_{BB,t+1}^*)^{-1}\!=\!\tilde{W}_{BB,t}^{1/2} U_t^\top & \!\!\!\operatorname*{diag}_{1\leq i \leq d_B}(\max \{\frac{2}{\alpha}, \frac{1}{\sigma_{i,t}^2}\}) U_t \tilde{W}_{BB,t}^{1/2}.
\end{align*}
From the definition of $D_t$, we have \[(P_{BB,t}^+)^{-1} = \tilde{W}_{BB,t}^{1/2} U_t^\top \!\!\!\operatorname*{diag}_{1\leq i \leq d_B}(\frac{1}{\sigma_{i,t}^2})  U_t \tilde{W}_{BB,t}^{1/2}.
\]
 Substituting these values in \eqref{eq:theta_calc}, the optimal compression $\Theta_t^*$ and quantization $N_{t}^{*}$ satisfy
\begin{align*}
   &{\Theta_t^*}^\top (N_{t}^{*})^{-1} \Theta_t^* =  {(P_{BB,t+1}^*)}^{-1} - (P^+_{BB,t})^{-1} \\
   &= \tilde{W}_{BB,t}^{1/2} U_t^\top \! \operatorname*{diag}_{1\leq i \leq d_B}(\max\{0, \frac{2}{\alpha}\!-\! \frac{1}{\sigma_{i,t}^2}\}) \  U_t \tilde{W}_{BB,t}^{1/2}. 
\end{align*}
Letting $\displaystyle
 \Sigma_t \!\triangleq\!\!\!\!\operatorname*{diag}_{1 \leq i \leq d_{B}}\!\!(\max\{0, \frac{2}{\alpha}\!-\!\frac{1}{\sigma_{i,t}^2}\})$ completes the proof.

\bibliographystyle{ieeetr}  
\bibliography{TRO/ref_TRO}     

\begin{thebibliography}{10}

\bibitem{queralta2020collaborative}
J.~P. Queralta, J.~Taipalmaa, B.~C. Pullinen, V.~K. Sarker, T.~N. Gia, H.~Tenhunen, M.~Gabbouj, J.~Raitoharju, and T.~Westerlund, ``Collaborative multi-robot search and rescue: Planning, coordination, perception, and active vision,'' {\em IEEE Access}, vol.~8, pp.~191617--191643, 2020.

\bibitem{rockenbauer2024traversing}
F.~M. Rockenbauer, J.~Lim, M.~G. M{\"u}ller, R.~Siegwart, and L.~Schmid, ``Traversing {M}ars: Cooperative informative path planning to efficiently navigate unknown scenes,'' {\em IEEE Robotics and Automation Letters}, vol.~10, no.~2, pp.~1776--1783, 2025.

\bibitem{sasaki2020map}
T.~Sasaki, K.~Otsu, R.~Thakker, S.~Haesaert, and A.~Agha-mohammadi, ``Where to map? {Iterative} rover-copter path planning for {M}ars exploration,'' {\em IEEE Robotics and Automation Letters}, vol.~5, no.~2, pp.~2123--2130, 2020.

\bibitem{psomiadis2024multi}
E.~Psomiadis, D.~Maity, and P.~Tsiotras, ``Multi-agent task-driven exploration via intelligent map compression and sharing,'' in {\em International Symposium on Distributed Autonomous Robotic Systems}, (New York, NY, Oct 28--30), Oct 28--30 2024.

\bibitem{grocholsky2006cooperative}
B.~Grocholsky, J.~Keller, V.~Kumar, and G.~Pappas, ``Cooperative air and ground surveillance,'' {\em IEEE Robotics \& Automation Magazine}, vol.~13, no.~3, pp.~16--25, 2006.

\bibitem{cladera2024challenges}
F.~Cladera, I.~D. Miller, Z.~Ravichandran, V.~Murali, J.~Hughes, M.~A. Hsieh, C.~Taylor, and V.~Kumar, ``Challenges and opportunities for large-scale exploration with air-ground teams using semantics,'' {\em arXiv preprint arXiv:2405.07169}, 2024.

\bibitem{isler2016information}
S.~Isler, R.~Sabzevari, J.~Delmerico, and D.~Scaramuzza, ``An information gain formulation for active volumetric {3D} reconstruction,'' in {\em International Conference on Robotics and Automation}, (Stockholm, Sweden, May 16--20), pp.~3477--3484, May 16--20, 2016.

\bibitem{yue2020collaborative}
Y.~Yue, C.~Zhao, Z.~Wu, C.~Yang, Y.~Wang, and D.~Wang, ``Collaborative semantic understanding and mapping framework for autonomous systems,'' {\em IEEE/ASME Transactions on Mechatronics}, vol.~26, no.~2, pp.~978--989, 2020.

\bibitem{cunningham2013ddf}
A.~Cunningham, V.~Indelman, and F.~Dellaert, ``{DDF-SAM} 2.0: Consistent distributed smoothing and mapping,'' in {\em International Conference on Robotics and Automation}, (Karlsruhe, Germany, May 6--10), pp.~5220--5227, May 6--10 2013.

\bibitem{chang2023d}
Y.~Chang, L.~Ballotta, and L.~Carlone, ``D-lite: Navigation-oriented compression of {3D} scene graphs for multi-robot collaboration,'' {\em IEEE Robotics and Automation Letters}, vol.~8, no.~11, pp.~7527--7534, 2023.

\bibitem{Damigos2024}
G.~Damigos, N.~Stathoulopoulos, A.~Koval, T.~Lindgren, and G.~Nikolakopoulos, ``Communication-aware control of large data transmissions via centralized cognition and 5{G} networks for multi-robot map merging,'' {\em Journal of Intelligent \& Robotic Systems}, vol.~110, no.~1, p.~22, 2024.

\bibitem{psomiadis2025communication}
E.~Psomiadis, A.~R. Pedram, D.~Maity, and P.~Tsiotras, ``Communication-aware iterative map compression for online path-planning,'' in {\em IEEE International Conference on Robotics and Automation}, (Atlanta, GA, May 19--23), May 19--23, 2025.

\bibitem{tishby2000information}
N.~Tishby, F.~C. Pereira, and W.~Bialek, ``The information bottleneck method,'' in {\em The 37th Annual Allerton Conference on Communication, Control and Computing}, (Monticello, IL, Sep 22--24), pp.~368--377, September 1999.

\bibitem{maity2021optimal}
D.~Maity and P.~Tsiotras, ``Optimal controller synthesis and dynamic quantizer switching for linear-quadratic-{G}aussian systems,'' {\em IEEE Transactions on Automatic Control}, vol.~67, no.~1, pp.~382--389, 2021.

\bibitem{maity2023optimal}
D.~Maity and P.~Tsiotras, ``Optimal quantizer scheduling and controller synthesis for partially observable linear systems,'' {\em SIAM Journal on Control and Optimization}, vol.~61, no.~4, pp.~2682--2707, 2023.

\bibitem{pedram2022gaussian}
A.~R. Pedram, R.~Funada, and T.~Tanaka, ``Gaussian belief space path planning for minimum sensing navigation,'' {\em IEEE Transactions on Robotics}, vol.~39, no.~3, pp.~2040--2059, 2022.

\bibitem{hibbard2023simultaneous}
M.~Hibbard, T.~Tanaka, and U.~Topcu, ``Simultaneous perception--action design via invariant finite belief sets,'' {\em Automatica}, vol.~155, p.~111140, 2023.

\bibitem{derpich2012improved}
M.~S. Derpich and J.~Ostergaard, ``Improved upper bounds to the causal quadratic rate-distortion function for {G}aussian stationary sources,'' {\em IEEE Transactions on Information Theory}, vol.~58, no.~5, pp.~3131--3152, 2012.

\bibitem{cover1999elements}
T.~M. Cover, {\em Elements of Information Theory}.
\newblock John Wiley \& Sons, 1999.

\bibitem{tanaka2016rate}
T.~Tanaka, K.~H. Johansson, T.~Oechtering, H.~Sandberg, and M.~Skoglund, ``Rate of prefix-free codes in {LQG} control systems,'' in {\em International Symposium on Information Theory}, (Barcelona, Spain, July 10-15), pp.~2399--2403, July 10-15 2016.

\bibitem{tanaka2016extended}
T.~Tanaka, K.~H. Johansson, T.~Oechtering, H.~Sandberg, and M.~Skoglund, {\em Rate of prefix-free codes in {LQG} control systems (Extended version)}.
\newblock 2016.
\newblock Available at https://arxiv.org/abs/1604.01227.

\bibitem{stentz1994optimal}
A.~Stentz, ``Optimal and efficient path planning for partially-known environments,'' in {\em International Conference on Robotics and Automation}, (Leuven, Belgium, May 16-21), pp.~3310--3317, May 16-21 1994.

\bibitem{psomiadis2024communication}
E.~Psomiadis, D.~Maity, and P.~Tsiotras, ``Communication-aware map compression for online path-planning,'' in {\em IEEE International Conference on Robotics and Automation}, (Yokohama, Japan, May 13--17), pp.~12368--12374, May 13--17 2024.

\bibitem{silva2009unified}
E.~I. Silva, {\em A unified framework for the analysis and design of networked control systems}.
\newblock {PhD} thesis, University of Newcastle, Callaghan, Australia, 2009.

\bibitem{jung2023optimized}
H.~Jung, A.~R. Pedram, T.~C. Cuvelier, and T.~Tanaka, ``Optimized data rate allocation for dynamic sensor fusion over resource constrained communication networks,'' {\em International Journal of Robust and Nonlinear Control}, vol.~33, no.~1, pp.~237--263, 2023.

\bibitem{tanaka2016semidefinite}
T.~Tanaka, K.-K.~K. Kim, P.~A. Parrilo, and S.~K. Mitter, ``Semidefinite programming approach to {G}aussian sequential rate-distortion trade-offs,'' {\em IEEE Transactions on Automatic Control}, vol.~62, no.~4, pp.~1896--1910, 2016.

\bibitem{vandenberghe1998determinant}
L.~Vandenberghe, S.~Boyd, and S.-P. Wu, ``Determinant maximization with linear matrix inequality constraints,'' {\em SIAM journal on matrix analysis and applications}, vol.~19, no.~2, pp.~499--533, 1998.

\bibitem{woodbury1950inverting}
M.~A. Woodbury, {\em Inverting modified matrices}.
\newblock Department of Statistics, Princeton University, 1950.

\bibitem{HiRISE2024}
{HiRISE}, ``{Digital Terrain Models (DTMs), Map: DTEEC\_041277\_2115\_040776\_2115\_A01},'' 2024.
\newblock Available at \url{https://www.uahirise.org/dtm/ESP_041277_2115} (accessed Sep. 1, 2024).

\bibitem{harville1998matrix}
D.~A. Harville, {\em Matrix algebra from a statistician's perspective}.
\newblock Taylor \& Francis, 1998.

\end{thebibliography}



\end{document}